\newtheorem{example}{Example}
\declaretheorem[name=Definition]{definition}
\setlist[1]{itemsep=0.5em}
\setlist{topsep=0.5em}
\setlist{leftmargin=3em}
\newenvironment{pproof}
    {\par\vspace*{0.5ex}\noindent\rm {\bf Proof:} }
    {\par\vspace*{1ex}\hfill\tiny$\Box$}
\def\input@path{{./}}
\DeclareFontFamily{U}{matha}{\hyphenchar\font45}
\DeclareFontShape{U}{matha}{m}{n}{
      <5> <6> <7> <8> <9> <10> gen * matha
      <10.95> matha10 <12> <14.4> <17.28> <20.74> <24.88> matha12
      }{}
\DeclareSymbolFont{matha}{U}{matha}{m}{n}
\DeclareMathSymbol{\odiv}         {2}{matha}{"63}
\newcommand*{\bigCup}{\mathop{\mathpalette\big@Cup\relax}\slimits@}
\newcommand*{\big@Cup}[2]{%
   \setbox\z@=\hbox{\m@th$#1\Cup$}%
   \setbox\z@=\vtop{\vbox{\kern.2\ht\z@\copy\z@}\kern.1\ht\z@}%
   \setbox\tw@=\hbox{\m@th$#1\bigcup$}%
   \vcenter{\hbox{\resizebox{!}{1.4\ht\tw@}{\box\z@}}}%
}
\newcommand{\twiddle}{\mathrel|\joinrel\sim} 
\newcommand{\mods}[1]{[\![#1]\!]\xspace}
\newcommand{\bel}[1]{[#1]\xspace}
\newcommand{\cbel}[1]{[#1]_{>}\xspace}
\newcommand{\contract}{\div\xspace} 
\newcommand{\STQ}{\oplus_{\mathrm{STQ}}\xspace}
\newcommand{\BetaRevR}[1]{$(\beta{#1}^{\ast}_{\scriptscriptstyle \preccurlyeq})$\xspace}
\newcommand{\CRevR}[1]{$(\mathrm{C}{#1}^{\ast}_{\scriptscriptstyle \preccurlyeq})$\xspace}
\newcommand{\CRevS}[1]{$(\mathrm{C}{#1}_{\scriptscriptstyle\mathrm{b}}^{\ast})$\xspace}
\newcommand{\CRevRn}[1]{$(\mathrm{C}{#1}^{\scriptscriptstyle \circledast}_{\scriptscriptstyle \preccurlyeq})$\xspace}
\newcommand{\CRevSn}[1]{$(\mathrm{C}{#1}^{\scriptscriptstyle \circledast}_{\scriptscriptstyle\mathrm{b}})$\xspace}
\newcommand{\CRevRnPlus}[1]{$(\mathrm{SC}{#1}^{\scriptscriptstyle \circledast}_{\scriptscriptstyle \preccurlyeq})$\xspace}
\newcommand{\CRevSnPlus}[1]{$(\mathrm{SC}{#1}^{\scriptscriptstyle \circledast}_{\scriptscriptstyle\mathrm{b}})$\xspace}
\newcommand{\PCRevRn}[1]{$(\mathrm{PC}{#1}^{\scriptscriptstyle \circledast}_{\scriptscriptstyle \preccurlyeq})$\xspace}
\newcommand{\PCRevSn}[1]{$(\mathrm{PC}{#1}^{\scriptscriptstyle \circledast}_{\scriptscriptstyle\mathrm{b}})$\xspace}
\newcommand{\CConR}[1]{$(\mathrm{C}{#1}^{\scriptscriptstyle \contract}_{\scriptscriptstyle \preccurlyeq})$\xspace}
\newcommand{\CConRn}[1]{$(\mathrm{C}{#1}^{\scriptscriptstyle \odiv}_{\scriptscriptstyle \preccurlyeq})$\xspace}
\newcommand{\CConS}[1]{$(\mathrm{C}{#1}_{\scriptscriptstyle\mathrm{b}}^{\scriptscriptstyle \contract})$\xspace}
\newcommand{\CConSn}[1]{$(\mathrm{C}{#1}_{\scriptscriptstyle\mathrm{b}}^{\scriptscriptstyle \odiv})$\xspace}
\newcommand{\IndRevR}{$(\mathrm{Ind}^{\ast}_{\scriptscriptstyle \preccurlyeq})$\xspace}
\newcommand{\IndRevS}{$(\mathrm{Ind}_{\scriptscriptstyle\mathrm{b}}^{\ast})$\xspace}
\newcommand{\IndRevRn}{$(\mathrm{Ind}^{\circledast}_{\scriptscriptstyle \preccurlyeq})$\xspace}
\newcommand{\IndRevSn}{$(\mathrm{Ind}^{\circledast}_{\scriptscriptstyle\mathrm{b}})$\xspace}
\newcommand{\KRev}[1]{$(\mathrm{K}{#1}^{\ast})$\xspace}
\newcommand{\KRevP}[1]{$(\mathrm{K}{#1}^{\circledast})$\xspace}
\newcommand{\KConP}[1]{$(\mathrm{K}{#1}^{\scriptscriptstyle \odiv})$\xspace}
\newcommand{\HI}{$(\mathrm{HI})$\xspace}
\newcommand{\HIP}{$(\mathrm{HI^{\circ}})$\xspace}
\newcommand{\LIP}{$(\mathrm{LI^{ \circ}})$\xspace}
\newcommand{\LI}{$(\mathrm{LI})$\xspace}
\newcommand{\MultiConAggregPrec}{$(\mathrm{Agg}^{\odiv}_{\scriptscriptstyle \preccurlyeq})$\xspace}
\newcommand{\MultiRevAggregPrec}{$(\mathrm{Agg}^{\circledast}_{\scriptscriptstyle \preccurlyeq})$\xspace}
\newcommand{\LBO}{$(\mathrm{LB}_{\scriptscriptstyle\mathrm{C}}^{\scriptscriptstyle \oplus})$\xspace}
\newcommand{\UBO}{$(\mathrm{UB}_{\scriptscriptstyle\mathrm{C}}^{\scriptscriptstyle \oplus})$\xspace}
\newcommand{\PPAR}{$(\mathrm{PAR}^{\scriptscriptstyle \oplus}_{\scriptscriptstyle \preccurlyeq})$\xspace}
\newcommand{\PPARMin}{$(\mathrm{PAR}^{\scriptscriptstyle \oplus}_{\scriptscriptstyle \min})$\xspace}
\newcommand{\FacPref}{$(\mathrm{F}^{\oplus}_{\preccurlyeq})$\xspace}
\newcommand{\SPU}{$(\mathrm{SPU}^{\oplus}_{\preccurlyeq})$\xspace}
\newcommand{\WPU}{$(\mathrm{WPU}^{\oplus}_{\preccurlyeq})$\xspace}
\newcommand{\FacMin}{$(\mathrm{F}^{\oplus}_{\scriptscriptstyle \min})$\xspace}
\newcommand{\Cn}{\mathrm{Cn}\xspace}
\newcommand{\CRat}{\mathrm{Cl_{rat}}\xspace}
\newcommand{\astN}{\ast_{\mathrm{N}}\xspace}
\newcommand{\astR}{\ast_{\mathrm{R}}\xspace}
\newcommand{\astL}{\ast_{\mathrm{L}}\xspace}
\newcommand{\MultiRevInter}{$(\mathrm{Conj}^{\scriptscriptstyle \circledast})$\xspace}
\newcommand{\EssRev}{$(\mathrm{S}^{\scriptscriptstyle \circledast}_{\scriptscriptstyle\mathrm{b}})$\xspace}
\newcommand{\PeeRev}{$(\mathrm{P}^{\scriptscriptstyle \circledast}_{\scriptscriptstyle\mathrm{b}})$\xspace}
\begin{document}

\title{Parallel Belief Revision via\\ Order Aggregation}

\author{
Jake Chandler$^1$
\and
Richard Booth$^2$\\
\affiliations
$^1$La Trobe University\\
$^2$Cardiff University\\
\emails
jacob.chandler@latrobe.edu.au,
boothr2@cardiff.ac.uk
}

\date{}

\maketitle

\begin{abstract}
Despite efforts to better understand the constraints that operate on single-step parallel (aka ``package'', ``multiple'') revision, very little work has been carried out on how to extend the model to the iterated case. A recent paper by Delgrande \& Jin outlines a range of relevant rationality postulates. While many of these are plausible, they lack an underlying unifying explanation. We draw on recent work on iterated parallel contraction to offer a general method for extending serial iterated belief revision operators to handle parallel change. This method, based on a family of order aggregators known as TeamQueue aggregators, provides a principled way to recover the independently plausible properties that can be found in the literature, without yielding the more dubious ones.
\end{abstract}

\section{Introduction}

The traditional operations of belief revision theory--revision and contraction--were initially designed to take single sentences as inputs, offering a model of ``serial'' change, in which beliefs were incorporated into or excised from an agent's worldview one at a time. In more recent years, these operators have been generalised to handle entire {\em sets} of sentences as inputs, yielding a model that can accommodate ``parallel'' change, where multiple beliefs are simultaneously processed. 

Until  recently, existing work in this direction had focused almost exclusively on {\em single-step} change, i.e.~studying the effects of a single episode of change on an agent's set of beliefs. With only a few exceptions, no research had been carried out on the more general issue of {\em iterated} change, i.e. studying the effects of a sequence of changes.  In relation to iterated parallel contraction, these exceptions include the work of Spohn  \cite{SpohnPC},  who offers a treatment in terms of his ranking-theoretic construction, and a recent paper by Chandler \& Booth \cite{chandler2025parallelbeliefcontractionorder} in which they propose an axiomatically characterised  construction based on a generalisation of their ``TeamQueue'' method of order aggregation (see  \cite{DBLP:journals/ai/BoothC19}). Regarding  iterated parallel revision, the only discussions that we are aware of are those of Zhang \cite{10.1007/978-3-540-24609-1_27} and  Delgrande \& Jin \cite{DelgrandeJames2012PbrR}. Zhang introduces generalisations to the parallel case of a number of well-known principles for iterated serial revision. Delgrande \& Jin critique Zhang's postulates, finding fault in one key principle, and offer several new ones of their own.  As we shall see, however, at least one of the axioms that  Delgrande \& Jin propose is not compelling, and those that are have yet to be underpinned by a convincing construction. 

We take cue from the TeamQueue approach to iterated parallel contraction to offer a similar constructive approach to the case of revision. We find that this approach precisely allows us to derive those principles of Delgrande \& Jin that are plausible, while failing to allow us to derive those that are not. 

The remainder of the paper proceeds as follows. Section \ref{sec:PrincBelCh} briefly recapitulates existing work on serial belief revision, while Section \ref{sec:PrincParaBelCh} does the same for parallel revision, focusing on Delgrande \& Jin's contribution. Section \ref{sec:TQ}  provides a succinct summary of Chandler \& Booth's aggregation-based proposal for parallel contraction. Section \ref{sec:AggRev} offers a somewhat similar aggregation-based solution to our problem of interest. Finally, Section \ref{sec:Concl} summarises the discussion and makes several suggestions for future research. 
Proofs of the various theorems and propositions are included in the appendix.

\section{Background on serial belief revision} 
\label{sec:PrincBelCh}


In what follows, the state of mind of an agent will be represented by an abstract {\em belief state} $\Psi$, which we do not assume to have any particular internal structure. This state gives rise to a {\em belief set} $\bel{\Psi}$ which contains all and only those sentences that the agent takes to be true when in state $\Psi$. Belief sets are  deductively closed and drawn from a  propositional, truth-functional, finitely-generated language $L$. We denote by $\mathrm{Cn}(S)$ the set of classical logical consequences of $S\subseteq L$. Where $A\in L$, we write $\Cn(A)$ instead of $\Cn(\{A\})$. The set of propositional worlds or valuations will be denoted by $W$, and the set of models of a given sentence $A$ by $\mods{A}$.

The standard ``serial''  model consists of two belief change operations, serial revision $\ast$ and contraction $\contract$. These take a state and a single input sentence and return a new state. Revision captures how an agent incorporates the input into their beliefs, while contraction captures the way in which they remove it from them. The model originally dealt with single-step serial change--the change induced by a single application of revision or contraction by a single sentence--but later research turned to iterated serial change--the change induced by a sequence of applications of serial revision or contraction.





%

 The AGM postulates for revision (see \cite{alchourron1985logic}) provide popular rationality constraints on {\em single-step} serial revision: 
%
%
%
%
%
%
%
%
%

%
%
%
%
%
%
%
%

\begin{tabbing}
\=BLAHBL\=\kill

\> \KRev{1} \> $\textrm{Cn}([\Psi\ast A])\subseteq [\Psi\ast A]$ \\[0.1cm]

\> \KRev{2} \> $A\in [\Psi\ast  A]$\\[0.1cm]

\> \KRev{3}  \> $[\Psi\ast  A]\subseteq\textrm{Cn}([\Psi]\cup\{ A\})$\\[0.1cm]

\> \KRev{4} \> If $\neg A\notin [\Psi]$, then $\textrm{Cn}([\Psi]\cup\{ A\})\subseteq[\Psi\ast  A]$\\[0.1cm]

\> \KRev{5} \>  If $A$ is consistent, then so too is $[\Psi\ast A]$\\[0.1cm]

\> \KRev{6} \> If $\textrm{Cn}(A)=\textrm{Cn}(B)$, then $[\Psi\ast A]=[\Psi\ast B]$\\[0.1cm]

\> \KRev{7} \> $[\Psi\ast A\wedge B]\subseteq\textrm{Cn}([\Psi\ast A]\cup\{B\})$\\[0.1cm]

\> \KRev{8} \> If $\neg B\notin [\Psi\ast A]$, then $\textrm{Cn}([\Psi\ast A]\cup\{B\})\subseteq$\\
\> \> $ [\Psi\ast A\wedge B]$\\[-0.25em]
\end{tabbing} 

\vspace{-1em}

\noindent They are derivable from an analogous set of postulates for contraction by means of an equality known as the Levi Identity \cite{levi1977subjunctives}, given by:

\begin{tabbing}
 \=BLAHBL\=\kill
\> \LI \> $\bel{\Psi\ast A} = \Cn(\bel{\Psi \contract \neg A}\cup \{A\})$ \\ 
[-0.25em]
\end{tabbing} 
\vspace{-1em}

\noindent This principle is justified as follows: The simplest way to modify $\bel{\Psi}$ to include $A$ would be to take the closure of the union of $\bel{\Psi}$ and $ \{A\}$, i.e.~ $\Cn(\bel{\Psi}\cup \{A\})$. Doing this, however, would lead to a violation of \KRev{5}, since $\bel{\Psi}$ and $A$ needn't be jointly consistent, even when $A$ is. To ensure consistency without jettisoning more beliefs than required, we therefore  consider the union of  $\bel{\Psi\contract \neg A}$ and $\{A\}$ instead. 

The so-called Harper Identity \cite{harper1976rational} takes us in the other direction, from single-step serial revision to contraction:

\begin{tabbing}
 \=BLAHBL\=\kill
\>\HI   \> $\bel{\Psi\contract A}= \bel{\Psi} \cap \bel{\Psi\ast \neg A}$ \\[-0.25em]
\end{tabbing} 
\vspace{-1em}


%

\noindent It has been shown that the single-step behaviour of a serial revision operator $\ast$ that satisfies the above postulates can be represented by associating with each state $\Psi$ a reflexive, complete and transitive binary relation (aka total preorder or TPO) $\preccurlyeq_\Psi$ over $W$, such that $\mods{\bel{\Psi}} = \min(\preccurlyeq_\Psi, W)$ and $\min(\preccurlyeq_{\Psi\ast A},W)=\min(\preccurlyeq_{\Psi}, \mods{A})$ (see  \cite{katsuno1991propositional}). We use $x \sim_{\Psi} y$ when $x \preccurlyeq_\Psi y$ and  $y \preccurlyeq_\Psi x$,  and $x \prec_{\Psi} y$ when $x \preccurlyeq_\Psi y$ but  $y \not\preccurlyeq_\Psi x$. Equivalence classes of worlds, i.e. sets of worlds closed under $ \sim_{\Psi} $, are sometimes represented using curly brackets, so that we write ``$x\prec\{y,z\}$'' instead of ``$x\prec y$ and $y\sim z$''. We  take the AGM postulates, and hence the TPO representability of single-step change, for granted. We call an operator that satisfies the AGM postulates an ``AGM operator''.

This single-step behaviour can also be captured by (i) enriching the language with a conditional connective $>$, associating each state $\Psi$ with a conditional belief set  $\cbel{\Psi}:=\{A>B\mid B\in \bel{\Psi\ast A}\}$ or (ii) associating $\Psi$ with a nonmonotonic consequence relation  $\twiddle_\Psi=\{\langle A, B\rangle\mid A > B\in \cbel{\Psi} \}$.  In (ii),  the AGM postulates ensure that  $\twiddle_\Psi$ is ``rational'' (in the sense of \cite{lehmann1992does}) and ``consistency preserving'' (see \cite{10.1007/BFb0018421}).



Many of the principles that follow can be presented in multiple equivalent ways. When this occurs, we use subscripts to distinguish between these formulations, dropping the subscript to more refer to the principle beyond specific presentation. A principle framed in terms of TPOs is denoted using  a subscript ``$\preccurlyeq$''. Occasionally, we may also want to express principles via minimal sets, symbolizing the $\preccurlyeq$-minimal subset of $S\subseteq W$, defined as $\{x\in S\mid \forall y\in S, x\preccurlyeq y\}$, by $\min(\preccurlyeq, S)$. The subscript ``$\min$'' denotes presentation in this style. When appropriate, the names of principles presented in terms of belief sets will include the subscript ``$\mathrm{b}$''. In addition to subscripts, we also use superscripts to remind the reader of the nature of the operation that is constrained by the relevant principle, so that, for example, some principles will carry the superscripts ``$\ast$'' or ``$\contract$''. 


In \cite{darwiche1997logic},  Darwiche \& Pearl proposed a set of postulates (henceforth the ``DP postulates'') governing {\em sequences} of serial revisions. These can be presented either ``syntactically'' in terms of belief sets or ``semantically'' in terms of TPOs. 
%
%
%
%
%
%
%
%
%
In syntactic terms, we have: 
\begin{tabbing}
\=BLAHBL\=\kill
\> \CRevS{1}  \> If $A \in \Cn{(B)}$ then $\bel{(\Psi \ast A) \ast B} = \bel{\Psi \ast B}$  \\[0.1cm]

\> \CRevS{2}  \> If $\neg A \in \Cn{(B)}$ then $\bel{(\Psi \ast A) \ast B} = \bel{\Psi \ast B}$ \\[0.1cm]

\> \CRevS{3}  \> If $A \in \bel{\Psi \ast B}$ then $A \in \bel{(\Psi \ast A) \ast B}$  \\[0.1cm]

\> \CRevS{4}  \> If $\neg A \not\in \bel{\Psi \ast B}$ then $\neg A \not\in \bel{(\Psi \ast A) \ast B}$\\[-0.25em]
\end{tabbing} 
\vspace{-1em}
Semantically, they are given by:
\begin{tabbing}
\=BLAHBL\=\kill

\>\CRevR{1}  \> If $x,y \in \mods{A}$ then $x \preccurlyeq_{\Psi \ast A} y$ iff  $x \preccurlyeq_\Psi y$\\[0.1cm]
\>\CRevR{2} \> If $x,y \in \mods{\neg A}$ then $x \preccurlyeq_{\Psi \ast A} y$ iff  $x \preccurlyeq_\Psi y$\\[0.1cm]

\> \CRevR{3}  \> If $x \in \mods{A}$, $y \in \mods{\neg A}$ and $x \prec_\Psi y$ then $x \prec_{\Psi \ast A}y$ \\[0.1cm]

\> \CRevR{4}  \> If $x \in \mods{A}$, $y \in \mods{\neg A}$ and $x \preccurlyeq_\Psi y$ then $x \preccurlyeq_{\Psi \ast A} y$\\[-0.25em]
\end{tabbing} 
\vspace{-1em}

\noindent We shall call an operator that satisfies the DP postulates a ``DP operator''.

Beyond these,  \cite{booth2006admissible} introduced  a strengthening of both \CRevS{3} and \CRevS{4} which they called ``(P)'' and which later appeared in \cite{jin2007iterated} under the name of ``Independence'': 
\begin{tabbing}
\=BLAHBL\=\kill
\> \IndRevS \> If $\neg A \not\in \bel{\Psi \ast B}$, then $A \in \bel{(\Psi \ast A) \ast B}$ \\[-0.25em]
\end{tabbing} 
\vspace{-1em}
Its semantic counterpart is given by:
\begin{tabbing}
\=BLAHBL\=\kill
\>  \IndRevR \> If $x \in \mods{A}$, $y \in \mods{\neg A}$ and $x \preccurlyeq_{\Psi} y$, then $ x \prec_{\Psi \ast A} y$ \\[-0.25em]
\end{tabbing} 
\vspace{-1em}

\noindent In semantic terms, the previous postulates only constrain the relation between the prior TPO and the TPO resulting from revision by a given sentence. In \cite{DBLP:journals/ai/BoothC20} two further postulates, \BetaRevR{1} and  \BetaRevR{2}, were introduced to constrain the relation between different posterior TPOs resulting from revisions by different sentences.

Constructive proposals premised on the idea that belief states can be identified with TPOs have also been tabled (though see \cite{DBLP:journals/jphil/BoothC17} for criticism). These include most notably  the operations of lexicographic ($\astL$), restrained ($\astR$)  and natural ($\astN$) revision (see, respectively, \cite{nayak2003dynamic},  \cite{booth2006admissible} and \cite{boutilier1996iterated}). They all satisfy the DP postulates, as well as  \BetaRevR{1} and  \BetaRevR{2}. Lexicographic and restrained revision additionally satisfy \IndRevS. Natural revision does not.

Although  \CRevS{1},  \CRevS{3} and  \CRevS{4} are fairly uncontroversial,  \CRevS{2} has received some criticism, which is relevant to the discussion of parallel revision that follows. Here is an alleged counterexample, due to Konieczny \& Pino P\'erez \cite{doi:10.1080/11663081.2000.10511003}. 

\begin{example}[Konieczny \& Pino P\'erez \cite{doi:10.1080/11663081.2000.10511003}] \label{ex:KPPRevise}
Consider a circuit containing an adder and a multiplier. We initially have no information about the working condition of either component. We then come to believe of each that they are working. We then change our mind again and believe that the multiplier is not working after all. After this second change of mind, the thought goes, it should not be the case that we also lose our belief that the adder is working.
\end{example}

\noindent Let $A$ and $B$ respectively stand for the claims that the adder is in order and that the multiplier is. Let $\Psi$ be the initial belief  state. If the initial change in view is to be modelled as a revision by a single sentence,  it would appear that we move from $\Psi$ to state $\Psi\ast A\wedge B$ and then $(\Psi\ast A\wedge B)\ast \neg B$. Since $\neg(A\wedge B)\in \Cn(\neg B)$ and $A\notin \bel{\Psi}$, the alleged intuition that $A\in\bel{(\Psi\ast A\wedge B)\ast \neg B}$ would mean that Example \ref{ex:KPPRevise} contradicts \CRevS{2}. 

Chopra {\em et al}, who take this kind of example to mean that  \CRevS{2}  is not acceptable,
 recommend replacing \CRevS{2} with:

\begin{tabbing}
\=BLAHBL\=\kill

\> (GR$_{\scriptscriptstyle\mathrm{b}}^{\scriptscriptstyle \ast}$) \> $\bel{(\Psi \ast \neg A) \ast A} = \bel{\Psi \ast A}$
%
\\[-0.25em]
\end{tabbing} 
\vspace{-1em}

\noindent We return to Example \ref{ex:KPPRevise} in the next section.

\section{Background on parallel belief revision} 
\label{sec:PrincParaBelCh}

The serial model’s limitations have been highlighted in several previous discussions, particularly its inability to fully encompass the spectrum of potential alterations in belief. It has been suggested that this model should be expanded to include the concepts of parallel revision and contraction, which involve the simultaneous addition or removal of a finite {\em set} $S= \{A_1,\ldots, A_n\}$ of sentences in $L$  (with set of indices $I=\{1,\ldots, n\}$).

We will use $\circledast$ and $\odiv$ to represent parallel revision and contraction, respectively, and assume that when the input is a singleton, the single-step effects of these operations can be expressed in terms of those of their serial equivalents, with $\bel{\Psi\circledast \{A\}} = \bel{\Psi\ast A}$ and $\bel{\Psi\odiv \{A\}} = \bel{\Psi\contract A}$. The symbol $\bigwedge S$ will represent $A_1\wedge \ldots\wedge A_n$ and $\neg S$ will represent $\{\neg A\mid A\in S\}$.

\subsection{Single-step parallel revision} 

For single-step parallel revision, a plausible proposal, which we endorse here, is to simply identify the belief set obtained by parallel revision by $S$ with the belief set obtained by serial revision by the conjunction $\bigwedge S$ of the members of $S$:

\begin{tabbing}
 \=BLAHBL\=\kill
\> \MultiRevInter \> $\bel{\Psi\circledast S} = \bel{\Psi\circledast \{\bigwedge S\}} $ \\[-0.25em]
\end{tabbing} 
\vspace{-1em}

\noindent Given the AGM postulates for serial revision, this suggestion is equivalent to a set of postulates for single-step parallel revision given in \cite{PeppasLimit}, where they are credited, with minor differences, to a preliminary version of \cite{LindstromTech}, published in 1991:

\begin{tabbing}
\=BLAHBL\=\kill

\> \KRevP{1} \> $\textrm{Cn}([\Psi\circledast S])\subseteq [\Psi\circledast S]$ \\[0.1cm]

\> \KRevP{2} \> $S\subseteq [\Psi\circledast  S]$\\[0.1cm]

\> \KRevP{3}  \> $[\Psi\circledast  S]\subseteq\textrm{Cn}([\Psi]\cup S)$\\[0.1cm]

\> \KRevP{4} \> If $[\Psi]\cup S$ is consistent, then $\textrm{Cn}([\Psi]\cup S)\subseteq$\\
\> \> $ [\Psi\circledast  S]$\\[0.1cm]

\> \KRevP{5} \>  If $S$ is consistent, then so is $[\Psi\circledast S]$\\[0.1cm]

\> \KRevP{6} \> If $\textrm{Cn}(S_1)=\textrm{Cn}(S_2)$, then $[\Psi\circledast S_1]=[\Psi\circledast S_2]$\\[0.1cm]

\> \KRevP{7} \> $[\Psi\circledast (S_1\cup S_2)]\subseteq\textrm{Cn}([\Psi\circledast S_1]\cup S_2)$\\[0.1cm]

\> \KRevP{8} \> If $[\Psi\circledast S_1]\cup S_2$ is consistent, then \\
\> \> $ \textrm{Cn}([\Psi\circledast S_1]\cup S_2)\subseteq [\Psi\circledast (S_1\cup S_2)]$\\[-0.25em]
\end{tabbing} 
\vspace{-1em}

\noindent Indeed, as is noted by Peppas \cite{PeppasLimit} and later Delgrande \& Jin \cite{DelgrandeJames2012PbrR}, who  all endorse  \MultiRevInter, the latter is an immediate consequence of \KRevP{6}.  Similarly, assuming \MultiRevInter, \KRevP{1} to \KRevP{8} are obviously recoverable from the AGM postulates.

We noted in Section \ref{sec:PrincBelCh} that, in the single-step situation, a correspondence exists between the AGM postulates for serial revision and contraction, via the Levi and Harper Identities. Since a similar set of postulates to  \KRevP{1}--\KRevP{8}, labelled \KConP{1}--\KConP{8} in  \cite{chandler2025parallelbeliefcontractionorder}, extends the AGM postulates for serial contraction to the parallel case, it is natural to ask whether a two-way correspondence can be established here too. The picture, however, is much less satisfactory here: in \cite[Thm~19.1]{1885-10318}, Fuhrmann showed that, if $\odiv$ satisfies \KConP{1}--\KConP{8} and  $\bel{\Psi \circledast S}$ is defined from $\bel{\Psi \odiv \neg S}$ by the following straightforward generalisation of the Levi Identity 
\begin{tabbing}
 \=BLAHBL\=\kill
\> \LIP \> $\bel{\Psi\circledast S} = \Cn(\bel{\Psi \odiv \neg S}\cup S)$  \\[-0.25em]
\end{tabbing} 
\vspace{-1em}

\noindent then $\circledast$ satisfies \KRevP{1}, \KRevP{2}, \KRevP{3}, \KRevP{4}, \KRevP{7},  and  \KRevP{8}, as well as the following weakening of \KRevP{6}:

\begin{tabbing}
\=BLAHBL\=\kill

\> \KRevP{6-} \> If, $\forall A_1\in S _1$, $\exists A_2\in S _2$ s.t.~$\textrm{Cn}(A_1)=\textrm{Cn}(A_2)$,\\
\> \> and vice versa, then $[\Psi\circledast S _1]=[\Psi\circledast S _2]$\\[-0.25em]

\end{tabbing}

\vspace{-1em}

\noindent Importantly, Fuhrmann notes that \KRevP{5} is {\em not} recoverable, as it clashes with the following plausible principle of ``Disjunctive Persistence'', which states that it is possible to perform a parallel contraction by a consistent set of sentences without thereby removing the belief that at least one of the members of that set is true:

\begin{tabbing}
\=BLAHBL\=\kill

\> (DiP$^{\scriptscriptstyle \odiv}$) \> There exist $\Psi$ and consistent $S\subseteq L$, s.t.\\
\> \> $ \bigvee S\in \bel{\Psi\odiv S}$\\[-0.25em]

\end{tabbing} 

\vspace{-1em}

\noindent Indeed, by this principle, even if $S$ is itself consistent, it may fail to be consistent with $\bel{\Psi\odiv \neg S}$, since we could still have $\bigvee \neg S\in \bel{\Psi\odiv \neg S}$. This leaves $\Cn(\bel{\Psi \odiv \neg S}\cup S)$ inconsistent and hence, by \LIP, $\bel{\Psi\circledast S}$ inconsistent as well. This is prohibited by \KRevP{5}, which requires $\bel{\Psi\circledast S}$ is inconsistent only if $S$ is. This is an important and problematic result, in our view, and we take it to demonstrate the implausibility of this seemingly natural way of extending the Levi Identity to the parallel case.


The fact that one can perform a parallel contraction by a set of sentences without removing the belief that at least one element is true also poses a problem for the most straightforward extension of the Harper Identity, namely:

\begin{tabbing}
 \=BLAHBL\=\kill
\> \HIP   \> $\bel{\Psi\odiv S}= \bel{\Psi} \cap \bel{\Psi\circledast \neg S}$ \\[-0.25em]
\end{tabbing} 
\vspace{-1em}

\noindent Indeed, the fact that it may be the case that $\bigvee S\in \bel{\Psi\odiv S}$ this time leads to a conflict with \KRevP{2}, i.e.~the requirement that $\neg S\subseteq [\Psi\circledast  \neg S]$: from the latter, assuming \KConP{1}, \KConP{5} and consistency of $ \neg S$, we have $\bigvee S\notin \bel{\Psi\circledast \neg S}$ and so, by  \HIP, $\bigvee S\notin \bel{\Psi\odiv S}$.

To sum up, we take \MultiRevInter~to be the correct way to handle the single-step case, with \LIP~and \HIP~proving to be implausible as generalisations of \LI~and \HI. 

\subsection{Iterated parallel revision}

In the single-step case, we can plausibly reduce parallel revision by $S$ to serial revision by $\bigwedge S$, As Delgrande \& Jin (\cite{DelgrandeJames2012PbrR}) have noted, this is not so in the iterated case: while we can identify $\bel{\Psi\circledast\{A, B\}}$ and $\bel{\Psi\ast A\wedge B}$, differences can emerge in subsequent operations, so that there may exist a sentence $C$, such that $\bel{(\Psi\circledast\{A, B\})\ast C}$ and $\bel{(\Psi\ast A\wedge B)\ast C}$ are distinct: the equality $\bel{(\Psi\circledast S)\circledast S'} = \bel{(\Psi\circledast \{\bigwedge S\})\circledast S'} $ may fail. So the beliefs states $\Psi\circledast\{A, B\}$ and $\Psi\ast A\wedge B$ cannot always be equated.
%
%

Interestingly, the example  supporting this claim is none other than Example \ref{ex:KPPRevise} above, which was deployed in criticism of \CRevS{2}.
Delgrande \& Jin argue that the first change in belief in this example (i.e. coming to believe of each of the adder and the multiplier that they are in working order) is not appropriately modelled by revision by a conjunction (so that we move to the state $\Psi\ast A\wedge B$) but rather by a revision by the corresponding set of conjuncts (so that we move to the state $\Psi\circledast \{A, B\}$). If the former model was appropriate, they claim, we would, after ultimately revising by the proposition that the multiplier is not working, lose our belief that the adder is functioning (following \CRevS{2}, which they find appropriate). However, if we properly interpret the situation as  parallel revision, they tell us no such consequence follows: intuitively $A\in \bel{(\Psi\circledast \{A, B\})\circledast\{\neg B\}}$.

This is  plausible and, indeed, the kind of motivation given for  \CRevS{2} does not carry over to the parallel case. As we saw earlier, the intuition there is that the effects of a serial revision should be undone if it is immediately followed by a revision that runs contrary to it. But in the parallel change modelling of Example  \ref{ex:KPPRevise}, the second step runs contrary to only a {\em strict subset} of the initial parallel operations. 

For the above reason, Delgrande \& Jin reject a strong parallel version of \CRevS{2},  proposed by Zhang in  \cite{10.1007/978-3-540-24609-1_27}:\footnote{The nomenclature is ours here, with the ``S'' standing for ``strong'', for contrast with a weaker version below.}
\begin{tabbing}
\=BLAHBL\=\kill

\> \CRevSnPlus{2} \> If $S_1\cup S_2$ is inconsistent, then  $\bel{(\Psi\circledast S_1)\circledast S_2} $\\
\> \> $ = \bel{\Psi\circledast S_2}$\\[0.1cm]

\> \CRevRnPlus{2} \> If $x,y \notin\mods{\bigwedge S}$ then $x \preccurlyeq_{\Psi\circledast S}   y$ iff  $x \preccurlyeq_\Psi y$\\[-0.25em]
\end{tabbing} 
\vspace{-1em}

\noindent since, given this principle, whenever $A\notin\bel{\Psi\circledast \{\neg B\}}$, it would follow that $A\notin \bel{(\Psi\circledast \{A, B\})\circledast\{\neg B\}}$, since $\{A, B\}\cup\{\neg B\}$ is inconsistent, yielding the incorrect result in the parallel revision interpretation of  Example \ref{ex:KPPRevise}. 

If the reductive proposal that we have seen does not plausibly generalise to the iterated case, how exactly, then, are we to handle iterated parallel revision?  Delgrande \& Jin endorse the following generalisations of \CRevR{1}, \CRevR{3} and  \CRevR{4}:

\begin{tabbing}
\=BLAHBL\=\kill

\> \CRevRn{1}\>  If $x,y \in\mods{\bigwedge S}$, then $x \preccurlyeq_{\Psi \circledast S}   y$ iff  $x \preccurlyeq_\Psi y$ \\[0.1cm]


\> \CRevRn{3}   \> If $x \in\mods{\bigwedge S}$, $y \notin\mods{\bigwedge S}$  and $x \prec_\Psi y$, then \\
\> \> $ x \prec_{\Psi \circledast S}   y$ \\[0.1cm]

\> \CRevRn{4} \> If $x \in\mods{\bigwedge S}$, $y \notin\mods{\bigwedge S}$ and $x \preccurlyeq_\Psi y$, then \\
\> \> $ x \preccurlyeq_{\Psi \circledast S}   y$\\[-0.25em]
\end{tabbing} 
\vspace{-1em}

\noindent They tell us (omitting the proof) that these correspond to the following syntactic principles presented in \cite{10.1007/978-3-540-24609-1_27}:

\begin{tabbing}
\=BLAHBL\=\kill

\> \CRevSn{1}\> If $S_1\subseteq \Cn{(S_2)}$, then  $\bel{(\Psi\circledast S_1)\circledast S_2} =$$  \bel{\Psi\circledast S_2}$  \\[0.1cm]

\> \CRevSn{3}   \> If $S_1\subseteq \bel{\Psi\circledast S_2}$, then  $S_1\subseteq \bel{(\Psi\circledast S_1)\circledast S_2} $ \\[0.1cm]

\> \CRevSn{4} \> If $S_1\cup \bel{\Psi\circledast S_2}$ is consistent, then so is  \\
\> \> $ S_1\cup \bel{(\Psi\circledast S_1)\circledast S_2} $ \\[-0.25em]

\end{tabbing} 
\vspace{-1em}

\noindent Delgrande \& Jin do not endorse any kind of parallel version of \CRevS{2} and indeed do not actually consider the question of whether a more plausible alternative to \CRevSnPlus{2} could be found. Such a generalisation, however, is not hard to devise:

\begin{tabbing}
\=BLAHBL\=\kill

\> \CRevRn{2} \> If $x,y \in\mods{\bigwedge \neg S}$, then $x \preccurlyeq_{\Psi\circledast S}   y$ iff  $x \preccurlyeq y$\\[-0.25em]

\end{tabbing} 
\vspace{-1em}

\noindent  A corresponding syntactic form can be given too:

\begin{restatable}{prop}{RPackSyntPlus}
\label{prop:RPackSyntPlus}
Let $\circledast$ be a parallel revision operator such that, for some AGM serial revision operator $\ast$, $\circledast$ and $\ast$ jointly satisfy  
 \MultiRevInter. Then \CRevRn{2} is equivalent to:

\begin{tabbing}
\=BLAHBL\=\kill

\> \CRevSn{2} \> If $\neg S_1\subseteq\Cn{(S_2)}$, then  $\bel{(\Psi\circledast S_1)\circledast S_2} =$$  \bel{\Psi\circledast S_2}$\\[-0.25em]
 
\end{tabbing} 
\vspace{-1em}

\end{restatable}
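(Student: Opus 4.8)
The plan is to use \MultiRevInter to collapse every parallel revision into a serial revision of the underlying state, and then to exploit the Katsuno--Mendelzon representation of the AGM operator $\ast$, namely $\mods{\bel{\Psi' \ast A}} = \min(\preccurlyeq_{\Psi'}, \mods{A})$, in order to pass freely between claims about belief sets and claims about the TPOs $\preccurlyeq_\Psi$ and $\preccurlyeq_{\Psi \circledast S}$. Applying \MultiRevInter to the composite state $\Psi \circledast S_1$ yields $\mods{\bel{(\Psi \circledast S_1) \circledast S_2}} = \min(\preccurlyeq_{\Psi \circledast S_1}, \mods{\bigwedge S_2})$, while applying it to $\Psi$ yields $\mods{\bel{\Psi \circledast S_2}} = \min(\preccurlyeq_\Psi, \mods{\bigwedge S_2})$. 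The crucial piece of bookkeeping is to observe that the syntactic antecedent $\neg S_1 \subseteq \Cn(S_2)$ is equivalent to the semantic inclusion $\mods{\bigwedge S_2} \subseteq \mods{\bigwedge \neg S_1}$: requiring $\neg A \in \Cn(S_2)$ for every $A \in S_1$ says precisely that every model of $\bigwedge S_2$ falsifies every member of $S_1$.

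For the direction from \CRevRn{2} to \CRevSn{2}, I would assume $\neg S_1 \subseteq \Cn(S_2)$, so that $\mods{\bigwedge S_2} \subseteq \mods{\bigwedge \neg S_1}$. By \CRevRn{2}, instantiated with $S := S_1$, the relations $\preccurlyeq_{\Psi \circledast S_1}$ and $\preccurlyeq_\Psi$ coincide on $\mods{\bigwedge \neg S_1}$, hence a fortiori on the subset $\mods{\bigwedge S_2}$; therefore their $\min$-sets over $\mods{\bigwedge S_2}$ are identical. By the two identities above this gives $\mods{\bel{(\Psi \circledast S_1) \circledast S_2}} = \mods{\bel{\Psi \circledast S_2}}$, and since both belief sets are deductively closed, \CRevSn{2} follows.

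For the converse, fix $x, y \in \mods{\bigwedge \neg S}$ with $x \neq y$ (the case $x = y$ being settled by reflexivity). Since $L$ is finitely generated, I can choose a sentence $C$ with $\mods{C} = \{x, y\}$ and set $S_2 := \{C\}$ and $S_1 := S$. Then $\mods{\bigwedge S_2} = \{x,y\} \subseteq \mods{\bigwedge \neg S}$, which by the translation above is exactly the antecedent $\neg S_1 \subseteq \Cn(S_2)$, so \CRevSn{2} applies and delivers $\bel{(\Psi \circledast S) \circledast \{C\}} = \bel{\Psi \circledast \{C\}}$. Reading this through \MultiRevInter and the KM identity gives $\min(\preccurlyeq_{\Psi \circledast S}, \{x,y\}) = \min(\preccurlyeq_\Psi, \{x,y\})$, and from the equality of these two-element minima the pairwise verdict is recovered in the usual way: $x \preccurlyeq_{\Psi \circledast S} y$ iff $x \in \min(\preccurlyeq_{\Psi \circledast S}, \{x,y\})$ iff $x \in \min(\preccurlyeq_\Psi, \{x,y\})$ iff $x \preccurlyeq_\Psi y$, which is \CRevRn{2}.

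The two identities and the implication $\CRevRn{2} \Rightarrow \CRevSn{2}$ are routine. I expect the main obstacle to lie in the converse, specifically in justifying the ``probe'' $S_2 = \{C\}$ with $\mods{C} = \{x,y\}$: one must verify that this choice genuinely satisfies the antecedent of \CRevSn{2} (which is exactly where the hypothesis $x,y \in \mods{\bigwedge \neg S}$ is consumed) and that reading off a single pairwise comparison from the restricted minima is sound. Both points rest on the finite definability of sets of worlds and on the fact that an AGM TPO is completely determined by its restrictions to pairs, so that no circularity with \MultiRevInter arises.
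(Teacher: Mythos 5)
Your proof is correct and takes essentially the same route as the paper's: both directions reduce the syntactic claim to equality of minimal sets over $\mods{\bigwedge S_2}$ via \MultiRevInter, and your converse uses exactly the paper's device of a probe sentence $C$ with $\mods{C}=\{x,y\}$ to read off pairwise comparisons. The only difference is presentational—the paper argues both inclusions by reductio, whereas you argue directly from the coincidence of the two TPOs on $\mods{\bigwedge \neg S_1}$.
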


\noindent Unlike  its stronger counterpart  \CRevSnPlus{2}, this principle  doesn't get us into trouble in relation to Example 1. Indeed, this principle is perfectly consistent with its being the case that both $A\notin\bel{\Psi\circledast \{\neg B\}}$ and $A\in \bel{(\Psi\circledast \{A, B\})\circledast\{\neg B\}}$, since we have $\{\neg A, \neg B\}\nsubseteq\Cn(\{\neg B\})$ and hence the principle doesn't generate the equality $\bel{(\Psi\circledast S_1)\circledast S_2} = \bel{\Psi\circledast S_2}$.

As it turns out, the entire set  \CRevSn{1}--\CRevSn{4} of parallel versions of of the DP postulates is derivable, from two further --strong but plausible--principles that Delgrande \& Jin tentatively endorse.  The syntactic versions of the latter are given by:

\begin{tabbing}
\=BLAHBL\=\kill

\> \PCRevSn{3}  \> If (i) $S_2\neq\varnothing$ and (ii), for all $S\subseteq S_1$ s.t.~$S\cup S_2$\\
\> \>  is consistent, we have  $A\in \bel{\Psi\circledast (S\cup S_2)}$, then \\
\> \> (iii) $A\in \bel{(\Psi\circledast  S_1)\circledast  S_2}$ \\[0.1cm]

\>  \PCRevSn{4} \> If (i) $S_2\neq\varnothing$ and (ii), for all $S\subseteq S_1$ s.t.~$S\cup S_2$\\
\> \>  is consistent, we have $\neg A\notin \bel{\Psi\circledast (S\cup S_2)}$, \\
\> \>  then (iii) $\neg A\notin \bel{(\Psi\circledast  S_1)\circledast  S_2}$ \\[-0.25em] 

\end{tabbing} 
\vspace{-1em}

\noindent While these principles are perhaps a little tricky to interpret, their semantic versions, which are also provided in \cite{DelgrandeJames2012PbrR}, have a much more immediate appeal. They are formulated using the following useful notation:

\begin{definition}
Where $S\subseteq L$, and $x\in W$, $(S\!\mid\! x)$ denotes the subset of $S$ that is true in $x$ (i.e.~$(S\!\mid\! x)=\{A\in S\!\mid\! x\models A\}$).
\end{definition}

\noindent and are given by:

\begin{tabbing}
\=BLAHBL\=\kill

\> \PCRevRn{3}  \> If $(S\!\mid\! y)\subseteq (S\!\mid\! x)$ and $x \prec_\Psi y$ then $x \prec_{\Psi \circledast S}   y$ \\[0.1cm]

\>  \PCRevRn{4} \> If $(S\!\mid\! y)\subseteq (S\!\mid\! x)$ and $x \preccurlyeq_\Psi y$ then $x \preccurlyeq_{\Psi \circledast S}   y$\\[-0.25em]
\end{tabbing} 
\vspace{-1em}

\noindent These principles essentially state the following: if $x$ makes true at least those sentences in $S$ that $y$ does, then revision by $S$ doesn't improve the position of $y$ with respect to $x$. By contrast, \CRevRn{3} and \CRevRn{4} only jointly stipulate that the position of $y$ with respect to $x$ doesn't improve in the special case in which $x$ makes all the sentences in $S$ true and $y$ does not make them all true. So, as Delgrande \& Jin remark, \PCRevRn{3}  and \PCRevRn{4}  entail \CRevRn{3} and \CRevRn{4}. But note that principles \CRevRn{1} and \CRevRn{2} are {\em also} derivable from \PCRevRn{3} and \PCRevRn{4}. Indeed, it obviously follows from the latter that, when $(S\!\mid\! y) = (S\!\mid\! x)$,  we have $x \preccurlyeq_\Psi y$ iff $x \preccurlyeq_{\Psi \circledast S}   y$. But clearly $(S\!\mid\! y) = (S\!\mid\! x)$ holds true whenever the antecedents of either  \CRevRn{1} or  \CRevRn{2} do, i.e. whenever either $x, y \in\mods{\bigwedge S}$ or $x, y \in\mods{\bigwedge \neg S}$. So, to summarise, we have:

\begin{restatable}{prop}{PCtoC}
\label{prop:PCtoC}
Let $\circledast$ be a parallel revision operator that satisfies  \PCRevRn{3} and  \PCRevRn{4}. Then $\circledast$ satisfies   \CRevRn{1}--\CRevRn{4}.

\end{restatable}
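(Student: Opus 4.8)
The plan is to handle the four postulates in two groups, exploiting the fact that \PCRevRn{3} and \PCRevRn{4} are stated for arbitrary pairs $x,y$ subject only to the inclusion condition $(S\!\mid\! y)\subseteq(S\!\mid\! x)$, and that $\preccurlyeq_\Psi$ is a total preorder. I would first dispatch the two easy cases, \CRevRn{3} and \CRevRn{4}. Their common antecedent stipulates $x\in\mods{\bigwedge S}$ and $y\notin\mods{\bigwedge S}$. Since $x$ satisfies every conjunct, we have $(S\!\mid\! x)=S$, whence $(S\!\mid\! y)\subseteq S=(S\!\mid\! x)$ trivially. The inclusion hypotheses of \PCRevRn{3} and \PCRevRn{4} are thus met, and these principles deliver exactly the conclusions of \CRevRn{3} and \CRevRn{4} respectively.

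Next I would treat the biconditionals \CRevRn{1} and \CRevRn{2}. The organising observation is that both antecedents force $(S\!\mid\! x)=(S\!\mid\! y)$: when $x,y\in\mods{\bigwedge S}$ we have $(S\!\mid\! x)=S=(S\!\mid\! y)$, and when $x,y\in\mods{\bigwedge\neg S}$ both worlds falsify every member of $S$, so $(S\!\mid\! x)=\varnothing=(S\!\mid\! y)$. It therefore suffices to establish the auxiliary claim that $(S\!\mid\! x)=(S\!\mid\! y)$ implies the equivalence $x\preccurlyeq_\Psi y \Leftrightarrow x\preccurlyeq_{\Psi\circledast S} y$, since \CRevRn{1} and \CRevRn{2} are then immediate instances.

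For the auxiliary claim, note that $(S\!\mid\! x)=(S\!\mid\! y)$ yields both inclusions $(S\!\mid\! y)\subseteq(S\!\mid\! x)$ and $(S\!\mid\! x)\subseteq(S\!\mid\! y)$, so \PCRevRn{3} and \PCRevRn{4} may be applied with the roles of $x$ and $y$ taken in either order. The forward direction is direct: if $x\preccurlyeq_\Psi y$, then \PCRevRn{4} gives $x\preccurlyeq_{\Psi\circledast S} y$. For the converse I would argue contrapositively. Suppose $x\not\preccurlyeq_\Psi y$; by completeness of $\preccurlyeq_\Psi$ this forces $y\prec_\Psi x$. Applying \PCRevRn{3} with $x$ and $y$ interchanged (legitimate since $(S\!\mid\! x)\subseteq(S\!\mid\! y)$) then yields $y\prec_{\Psi\circledast S} x$, i.e.~$x\not\preccurlyeq_{\Psi\circledast S} y$. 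This gives the ``only if'' half and completes the biconditional, hence both \CRevRn{1} and \CRevRn{2}.

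The only mildly delicate point is this last step: obtaining the ``only if'' directions of the biconditionals requires invoking the \emph{strict} principle \PCRevRn{3} together with completeness of the prior order, rather than \PCRevRn{4} alone. Everything else reduces to substituting the special conditions $(S\!\mid\! x)=S$ or $(S\!\mid\! x)=(S\!\mid\! y)$ into the inclusion hypotheses of the two package principles.
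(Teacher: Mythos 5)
Your proof is correct and follows essentially the same route as the paper's own (inline) argument: \CRevRn{3} and \CRevRn{4} follow because their antecedents force $(S\!\mid\! y)\subseteq(S\!\mid\! x)$, and \CRevRn{1}, \CRevRn{2} follow because their antecedents force $(S\!\mid\! x)=(S\!\mid\! y)$, which yields the biconditional. The only difference is that the paper declares the biconditional step ``obvious,'' whereas you usefully spell out that the converse direction needs completeness of $\preccurlyeq_\Psi$ together with the strict principle \PCRevRn{3} applied with $x$ and $y$ interchanged.
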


\noindent Notably, the problematic \CRevRnPlus{2} does {\em not} follow from these principles, however. We shall  present a construction for which \PCRevRn{3}  and \PCRevRn{4} are sound but  \CRevRnPlus{2} is not. (Proposition  \ref{prop:NoStrongTwo}  and Proposition \ref{prop:PCR} below.)

Besides  \PCRevRn{3}  and \PCRevRn{4}, Delgrande \& Jin  also endorse this plausible postulate, which we give in terms of both belief sets and minimal sets:

\begin{tabbing}
\=BLAHBL\=\kill

\> \EssRev  \>  $ \bel{\Psi\circledast (S_1 \cup S_2)}=$\\
\> \> $  \bel{(\Psi\circledast (S_1 \cup \neg S_2))\circledast  (S_1 \cup S_2)}$ \\[0.1cm]

\> $(\mathrm{S}^{\scriptscriptstyle \circledast}_{\scriptscriptstyle \min})$  \>  $\min(\preccurlyeq_{\Psi}, \mods{\bigwedge(S_1\cup S_2)})=$\\
\> \> $ \min(\preccurlyeq_{\Psi\circledast (S_1\cup \neg S_2)}, \mods{\bigwedge (S_1\cup S_2)})$ \\[-0.25em]

\end{tabbing} 
\vspace{-1em}

\noindent This mildly strengthens the parallel version of the principle (GR$^{\scriptscriptstyle \ast}$), mentioned in relation to  Example \ref{ex:KPPRevise} (set $S_1=\varnothing$):
\begin{tabbing}
\=BLAHBL\=\kill

\> (GR$_{\scriptscriptstyle\mathrm{b}}^{\scriptscriptstyle \circledast}$) \> $\bel{(\Psi \circledast \neg S) \circledast S} = \bel{\Psi \circledast S}$\\[0.1cm]

\> (GR$^{\scriptscriptstyle \circledast}_{\scriptscriptstyle \min}$) \> $\min(\preccurlyeq_{\Psi\circledast \neg S}, \mods{\bigwedge S})=\min(\preccurlyeq_{\Psi}, \mods{\bigwedge S})$\\[-0.25em]
\end{tabbing} 
\vspace{-1em}

\noindent  (GR$^{\scriptscriptstyle \circledast}$) is itself a special case of \CRevSn{2}.

Beyond these, they also propose two further possibly more controversial properties.  First, since they endorse \IndRevS~in the serial case, they argue in favour of a corresponding generalisation to the parallel case, which strengthens  \CRevRn{3}   and \CRevRn{4}, in the same way that \IndRevS~strengthened \CRevR{3} and \CRevR{4}. The formulation of this principle is obvious:  

\begin{tabbing}
\=BLAHBL\=\kill

\> \IndRevSn \> If $S_1\cup \bel{\Psi\circledast S_2}$ is consistent,  then  $S_1\subseteq$\\
\> \> $  \bel{(\Psi\circledast S_1)\circledast S_2} $\\[0.1cm]

\> \IndRevRn \>  If $x \in\mods{\bigwedge S}$, $y \notin\mods{\bigwedge S}$ and $x \preccurlyeq_{\Psi} y$, then \\
\> \>  $x \prec_{\Psi \circledast S}   y$\\[-0.25em]
\end{tabbing} 
\vspace{-1em}


\noindent Finally, they propose:

\begin{tabbing}
\=BLAHBL\=\kill

\> \PeeRev \>  If $S_1\cup S_2$ is consistent, then  $S_1\subseteq$\\
\> \> $  \bel{(\Psi\circledast (S_1\cup \neg S_2))\circledast S_2}$\\[0.1em]

%
%

\> ($\mathrm{P}^{\scriptscriptstyle \circledast}_{\scriptscriptstyle \min}$) \>    If $S_1\cup S_2$ is consistent,  then \\
\> \> $ \min(\preccurlyeq_{\Psi\circledast (S_1\cup \neg S_2)}, \mods{\bigwedge S_2})\subseteq \mods{S_1}$\footnotemark{}   \\[-0.25em]

\end{tabbing} 
\vspace{-1em}

\footnotetext{\PeeRev~and  \EssRev~are formulated in a different, but logically equivalent, manner in the original text.}

%
%
%
%

\noindent Clearly, \PeeRev~aims to deliver the right kind of intuition in relation to cases like Example \ref{ex:KPPRevise}. It tells us that, if we revise by a set of sentences $S$ and then by the negation of a subset of these, the non-negated remainder of $S$ survives this second revision (as long as it is consistent with its input).


The postulate, however, seems unduly strong.
%
Indeed, restricted to singleton input sets, it gives us the following problematic consequence: if $A$ and $\neg B$ are consistent, then $A\in\bel{(\Psi\circledast\{A, B\})\circledast\{\neg B\}}$.
To see why this is an issue:

\begin{example}
\label{ex:KPPTwo}
As in Example \ref{ex:KPPRevise} but we are convinced that our friend Bill has made sure that the functionalities of the adder and multiplier are positively correlated, so that the adder is working  if and only if  the multiplier is working too.

\end{example}

\noindent  Although $A$ and $\neg B$ are jointly logically consistent, it seems incorrect to then hold that $A\in\bel{(\Psi\circledast\{A, B\})\circledast\{\neg B\}}$.

Delgrande \& Jin show their properties are sound for a particular construction. This construction, however, is  both  by their own admission, ``somewhat complicated'' and based on Spohn's ranking function formalism (see \cite{Spohn1988-SPOOCF}), whose recent axiomatic foundations remain poorly understood  \cite{ChandlerSpohn}.

In this section, we've seen that Delgrande \& Jin have convincingly argued against Zhang's strong generalisation of  Darwiche \& Pearl's second postulate  (Zhang's \CRevSnPlus{2}) and offered a plausible strengthening of his generalisations of the remaining three (his \CRevSn{1}, \CRevSn{3} and \CRevSn{4}), via \PCRevSn{3} and \PCRevSn{4}. They have also introduced a new promising postulate \EssRev, and, for those who find  \IndRevS~to be a compelling constraint in the serial case, an appropriate corresponding generalisation to the parallel case, in the form of  \IndRevSn. On the more negative side, their proposal does  include the implausible \PeeRev~and currently lacks a compelling constructive foundation.

Next, we propose a constructive approach that enables one to carry over, in a principled manner, constraints from  iterated serial  to iterated parallel revision. Under mild assumptions, it validates the more plausible principles that we have seen (e.g. \PCRevSn{3},  \PCRevSn{4}, and \EssRev), while invalidating the less plausible ones (e.g.~\CRevSnPlus{2} and \PeeRev). It  draws from the order aggregation-based approach to iterated parallel contraction of \cite{chandler2025parallelbeliefcontractionorder}. In the next section, we briefly recapitulate (i)  the  ``TeamQueue'' aggregation method presented there, which generalises the approach in \cite{DBLP:journals/ai/BoothC19}, and (ii) its application to the problem of iterated parallel contraction. 

\section{TeamQueue aggregation and iterated parallel contraction} 
\label{sec:TQ}


Where $I$ is an index set, a TeamQueue aggregator is a function $\oplus$ taking as inputs tuples $\mathbf{P} = \langle\preccurlyeq_{i}\rangle_{i\in I}$ of TPO over $W$  known as ``profiles'' and returning single TPOs over $W$ as outputs. When the identity of $\mathbf{P} $ is clear from context, we write $\oplus\mathbf{P}$ to denote  $\oplus (\mathbf{P})$ and  $x \preccurlyeq_{\oplus\mathbf{P}} y$ to denote $\langle x, y\rangle\in \oplus\mathbf{P}$, or simply $x \preccurlyeq_{\oplus} y$. The constructive definition makes use of the representation of a TPO $\preccurlyeq_j$ by means of an ordered partition $\langle S_1, S_2, \ldots S_{m_j}\rangle$ of $W$, defined inductively by setting, for each $i \geq 1$, $S_i=\min(\preccurlyeq_j, \bigcap_{k<i}S^c_{k})$, where $S^c$ is the complement of $S$. This representation grounds the notion of  the {\em absolute} rank $r_j(x)$ of an alternative $x$, with respect to $\preccurlyeq_j$. The absolute rank of an alternative is given by its position in the ordered partition, so that $r_j(x)$ is such that $x\in S_{r_j(x)}$. The aggregation method is then defined inductively as follows:

\begin{definition}
$\oplus$ is a {\em TeamQueue (TQ) aggregator} iff, for each profile $\mathbf{P}=\langle\preccurlyeq_1, \ldots \preccurlyeq_n\rangle$, there exists a sequence $\langle a_{\mathbf{P}}(i)\rangle_{i \in \mathbb{N}}$ such that $\emptyset \neq a_{\mathbf{P}}(i) \subseteq \{ 1, \ldots, n\}$ for each $i$  and the ordered partition $\langle T_1, T_2, \ldots, T_m\rangle$ of indifferences classes corresponding to $\preccurlyeq_{\oplus}$ is constructed inductively as follows:
\[
T_{i} = \bigcup_{j \in a_{\mathbf{P}}(i)} \min(\preccurlyeq_j, \bigcap_{k<i}T_{k}^c)
\]
where  $m$ is minimal s.t. $\bigcup_{i\leq m} T_i = W$. 
\end{definition}

\noindent In the initial step,  this procedure cuts the minimal elements from one or more of the input TPOs and then pastes them into the lowest rank of the output TPO. Any remaining duplicates of these minimal elements in the input TPOs are also deleted.  This operation is reiterated at each subsequent step until all input TPOs have been fully processed.

One noteworthy member of the TQ aggregator family manages the queues in a ``synchronous'' or again ``concurrent'' manner. This implies that at every stage, the minimal elements from {\em all} TPOs are incorporated into the appropriate output rank. This aggregator is defined as follows:

\begin{definition}
The {\em Synchronous TeamQueue (STQ)} aggregator $\STQ$ is the TeamQueue aggregator for which $a_{\mathbf{P}}(i) = \{1,\ldots, n\}$ for all profiles $\mathbf{P}=\langle \preccurlyeq_1,\ldots,  \preccurlyeq_n \rangle$ and all $i$. 
\end{definition}

\noindent In \cite{chandler2025parallelbeliefcontractionorder}, the family of TQ aggregators was shown to be characterisable in terms of the following ``factoring'' property, which we give both in terms of minimal sets and in terms of binary relations:

\begin{tabbing}
\=BLAHBLAI\= ooooo \= \kill
 \> \FacMin \> For all $S\subseteq W$, there exists $X\subseteq I$ s.t.\\
\> \> $ \min(\preccurlyeq_{\oplus},  S )= \bigcup_{j\in X} \min(\preccurlyeq_j,  S )$ \>\\[0.1cm]
\> \FacPref \> Assume that $x_1$ to $x_n$ are s.t.~$x_i \preccurlyeq_i y$. Then  \\
\> \>  there  exists $j\in I$ s.t.~ \>  \\
\>  \>  (i)  \> if  $x_j \prec_j y$, then  $x_j \prec_\oplus y$, and \\[0.1cm]
\> \>  (ii)  \> if  $x_j \preccurlyeq_j y$, then  $x_j \preccurlyeq_\oplus y$\\[-0.25em]
\end{tabbing}

\vspace{-1em}

\noindent A characterisation of $\STQ$ can be achieved by supplementing \FacMin~ with a principle of ``Parity'':

%
%


\begin{tabbing}
\=BLAHBLAI\=\kill

\> \PPARMin    \>  If $x\prec_\oplus y $ for all $x \in S^c$, $y \in S$, then \\
\> \> $ \bigcup_{i\in I}\min( \preccurlyeq_{i},S)\subseteq \min( \preccurlyeq_{\oplus},S)$\\[0.1cm] 
\> \PPAR   \>  If $x \prec_{\oplus} y$ then for each $i \in I$ there exists $z$ s.t. \\
\> \> $ x \sim_{\oplus} z$ and  $z \prec_i y$\\[-0.25em]
\end{tabbing}
 \vspace{-1em}

 \noindent In \cite{chandler2025parallelbeliefcontractionorder}, it was shown that there exists a close connection between $\STQ$ and  the concept of the rational closure of a set of conditionals $\Gamma$ (\cite{lehmann1992does}), which represents the rational set of conditionals that can be inferred from $\Gamma$. More specifically, it was shown that $\cbel{\preccurlyeq_{ \STQ} }=\CRat(\bigcap_i \cbel{\preccurlyeq_i})$.

\begin{figure}
  \centering
    \includegraphics[width=0.35\textwidth]{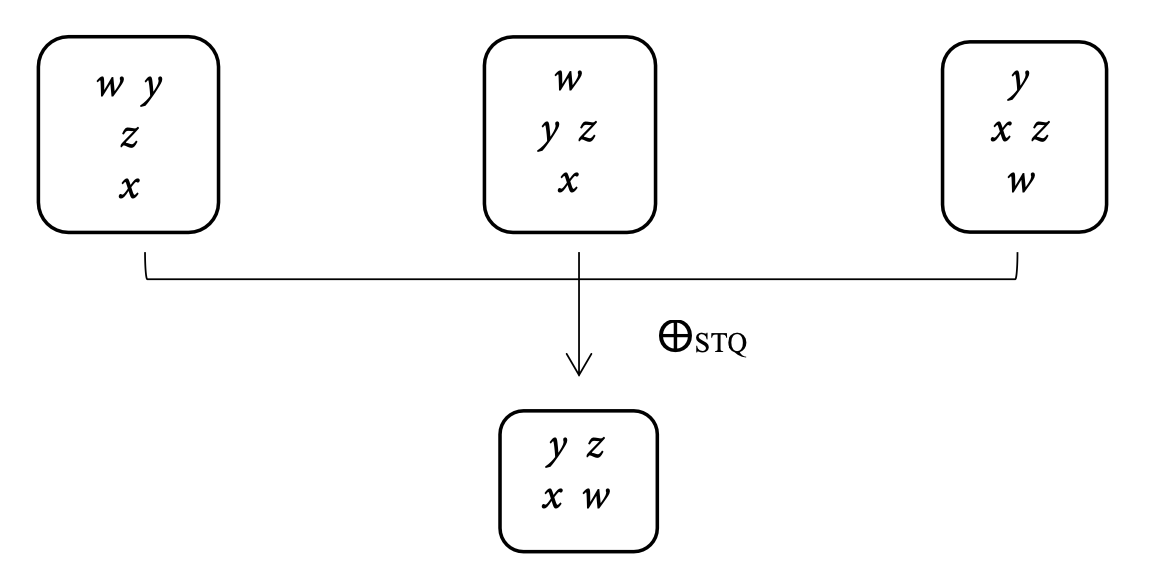}
  \caption{Illustration of the $\STQ$ aggregator, using 3 inputs.  Boxes represent TPOs, with lower case letters arranged such that a lower letter corresponds to a lower world in the relevant ordering. 
  }
  \label{fig:TQeg}
\end{figure}

In the same paper, Chandler \& Booth made use of TeamQueue aggregation  to  define iterated parallel contraction in terms of iterated serial contraction. They assume the AGM postulates for serial contraction, as well as the following analogues for serial contraction of the DP postulates proposed by Chopra and colleagues  \cite{chopra2008iterated}, given syntactically by: 
\begin{tabbing}
\=BLAHBL\=\kill

\> \CConS{1}  \>  If $\neg A \in \mbox{Cn}(B)$ then $\bel{(\Psi \contract A) \ast B} = \bel{\Psi \ast B}$ \\[0.1cm]

\> \CConS{2} \> If $A \in \mbox{Cn}(B)$ then $\bel{(\Psi \contract A) \ast B} = \bel{\Psi \ast B}$ \\[0.1cm]

\> \CConS{3}   \> If $\neg A \in \bel{\Psi \ast B}$ then $\neg A \in \bel{(\Psi \contract A) \ast B}$ \\[0.1cm]

\> \CConS{4}  \> $A \not\in \bel{\Psi \ast B}$ then $A \not\in \bel{(\Psi \contract A) \ast B}$\\[-0.25em]
\end{tabbing} 
\vspace{-1em}
\noindent and semantically by:
\begin{tabbing}
\=BLAHBL\=\kill

\> \CConR{1}  \>  If $x,y \in \mods{\neg A}$ then $x \preccurlyeq_{\Psi \contract A} y$ iff  $x \preccurlyeq_\Psi y$ \\[0.1cm]

\> \CConR{2}  \> If $x,y \in \mods{A}$ then $x \preccurlyeq_{\Psi \contract A} y$ iff  $x \preccurlyeq_\Psi y$\\[0.1cm]

\> \CConR{3}   \> If $x \in \mods{\neg A}$, $y \in \mods{A}$ and $x \prec_\Psi y$ then \\
\> \> $ x \prec_{\Psi \contract A} y$ \\[0.1cm]

\> \CConR{4} \> If $x \in \mods{\neg A}$, $y \in \mods{A}$ and $x \preccurlyeq_\Psi y$ then \\
\> \> $ x \preccurlyeq_{\Psi \contract A} y$\\[-0.25em]
\end{tabbing} 
\vspace{-1em}

\noindent They endorse the following principle:
\begin{tabbing}
\=BLAHBL\=\kill

\> \MultiConAggregPrec\> $\preccurlyeq_{\Psi\odiv\{A_1,\ldots, A_n\}}=\oplus\{\preccurlyeq_{\Psi\contract A_1}, \ldots, \preccurlyeq_{\Psi\contract A_n}\}$\\[-0.25em]
\end{tabbing} 
\vspace{-1em}

\noindent and  offer the following definition 

\begin{definition}
\label{def:TQPackageContraction}
$\odiv$ is a {\em TeamQueue (resp.~Synchronous  TeamQueue) parallel contraction operator} if and only if it is a parallel contraction operator such that there exists a serial contraction operator $\contract$ satisfying the AGM postulates and the postulates of Chopra {\em et al} and a TeamQueue aggregator $\oplus$ (resp.~Synchronous TeamQueue aggregator $\STQ$), such that, for all $\Psi$ and $S\subseteq L$, $\preccurlyeq_{\Psi\odiv S}$ is  defined from the $\preccurlyeq_{\Psi\contract A_i}$ by \MultiConAggregPrec, using $\oplus$ (resp.~$\STQ$). \end{definition}

\noindent If we impose the constraint that $a_{\mathbf{P}}(1) = \{1,\ldots, n\}$ on the construction of $\oplus$, as is the case in relation to $\STQ$, then TeamQueue  parallel contraction yield the ``intersective'' definition of single-step parallel contraction, the principle according to which the belief set obtained after contraction by a set $S$ is given by the intersection of the belief sets obtained after contractions by each of the members of $S$ ($[\Psi\odiv\{A_1,\ldots, A_n\}]=\bigcap_{1\leq i\leq n}[\Psi\contract A_i]$). This intersective definition was proven in \cite{chandler2025parallelbeliefcontractionorder} to entail a set of appealing generalisations to the parallel case of the AGM postulates for serial contraction.

In \cite{chandler2025parallelbeliefcontractionorder},  it was also shown that the TQ approach allows us to recover generalisations to the parallel case of  the postulates of Chopra {\em et al}. These are:
\begin{tabbing}
\=BLAHBL\=\kill

\> \CConRn{1}  \>  If $x,y \in\mods{\bigwedge \neg S}$ then $x \preccurlyeq_{\Psi \odiv S}   y$ iff  $x \preccurlyeq_\Psi y$ \\[0.1cm]

\> \CConRn{2} \> If $x,y \in\mods{\bigwedge S}$ then $x \preccurlyeq_{\Psi \odiv S}   y$ iff  $x \preccurlyeq_\Psi y$\\[0.1cm]

\> \CConRn{3}   \> If $x \in\mods{\bigwedge \neg S}$, $y \notin\mods{\bigwedge \neg S}$  and $x \prec_\Psi y$ then \\
\> \> $ x \prec_{\Psi \odiv S}   y$ \\[0.1cm]

\> \CConRn{4} \> If $x \in\mods{\bigwedge \neg S}$, $y \notin\mods{\bigwedge \neg S}$ and $x \preccurlyeq_\Psi y$ then \\
\> \> $ x \preccurlyeq_{\Psi \odiv S}   y$\\[-0.25em]
\end{tabbing} 
\vspace{-1em}

\noindent Syntactic counterparts for these are provided as follows:

\begin{tabbing}
\=BLAHBL\=\kill

\> \CConSn{1}\> If $\neg S_1\subseteq \Cn{(S_2)}$, then  $\bel{(\Psi\odiv S_1)\circledast S_2}$\\
\> \> $ = \bel{\Psi\circledast S_2}$  \\[0.1cm]

\> \CConSn{2} \> If $S_1\subseteq \Cn{(S_2)}$, then  $\bel{(\Psi\odiv S_1)\circledast S_2}$\\
\> \> $  = \bel{\Psi\circledast S_2}$  \\[0.1cm]

\> \CConSn{3}   \> If $\neg S_1\subseteq \bel{\Psi\circledast S_2}$, then  $\neg S_1\subseteq$$  \bel{(\Psi\odiv S_1)\circledast S_2} $ \\[0.1cm]

\> \CConSn{4} \> If $\neg S_1\cup \bel{\Psi\circledast S_2}$ is consistent, then\\
\> \>   $\neg S_1\cup \bel{(\Psi\odiv S_1)\circledast S_2} $ is consistent \\[-0.25em]

\end{tabbing} 
\vspace{-1em}

\section{Parallel Revision via TeamQueue aggregation} 
\label{sec:AggRev}


%

TQ aggregation also offers a promising, though slightly less direct, route to parallel revision. This approach requires two steps:
\begin{itemize}

\item[(i)] TQ aggregate the TPOs obtained from the individual revisions by the members of $S$, yielding $\oplus\{\preccurlyeq_{\Psi\ast A_1},\ldots, \preccurlyeq_{\Psi\ast A_n}\}$,

\item[(ii)] Transform this TPO to ensure that the ``Success'' postulate  \KRevP{2} is satisfied, i.e.~that $S\subseteq\bel{\Psi\circledast S}$.


\end{itemize}
This second step is required, since:

\begin{restatable}{prop}{StepTwoRequired}
\label{prop:StepTwoRequired}
There exists a set of sentences $S\subseteq L$, an AGM and DP serial belief revision operator $\ast$ and a state $\Psi$, s.t.~ $\min(\oplus\{\preccurlyeq_{\Psi\ast A_1},\ldots, \preccurlyeq_{\Psi\ast A_n}\}, W)\nsubseteq\mods{\bigwedge S}$.
\end{restatable}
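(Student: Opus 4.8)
The plan is to settle this existence claim with a small, explicit counterexample, exploiting the fact that the lowest rank of \emph{any} TQ aggregate is just a (nonempty) union of the minimal sets of its inputs, and that the minimal models of an individual input $A_i$ need not satisfy the other conjuncts of $\bigwedge S$.

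First I would fix a language generated by two atoms $p$ and $q$, so that $W$ comprises the four valuations, which I write $pq$, $p\bar q$, $\bar pq$, $\bar p\bar q$ (a bar indicating that the atom is false). I set $S=\{A_1,A_2\}$ with $A_1=p$ and $A_2=q$, so that $\bigwedge S = p\wedge q$ and $\mods{\bigwedge S}=\{pq\}$. Next I would specify the prior state $\Psi$ through its faithful TPO, chosen so that the two ``mixed'' worlds sit strictly below the joint model $pq$:
\[
\bar p\bar q \;\prec_\Psi\; \{p\bar q,\, \bar pq\} \;\prec_\Psi\; pq .
\]
Since every total preorder is the faithful TPO of some state under an AGM and DP operator (e.g.\ natural or lexicographic revision, both noted in Section~\ref{sec:PrincBelCh} to satisfy the DP postulates), I may take $\ast$ to be any such operator realizing this assignment at $\Psi$; its iterated behaviour away from $\Psi$ is irrelevant to the argument.

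Then I would read off the minimal worlds of the two revised orders using the single-step equality $\min(\preccurlyeq_{\Psi\ast A},W)=\min(\preccurlyeq_\Psi,\mods{A})$ from Section~\ref{sec:PrincBelCh}. From the chosen ranking, $\min(\preccurlyeq_\Psi,\mods{p})=\{p\bar q\}$ and $\min(\preccurlyeq_\Psi,\mods{q})=\{\bar pq\}$, whence $\min(\preccurlyeq_{\Psi\ast p},W)=\{p\bar q\}$ and $\min(\preccurlyeq_{\Psi\ast q},W)=\{\bar pq\}$. Finally I would invoke the definition of TQ aggregation: writing $\mathbf{P}=\langle\preccurlyeq_{\Psi\ast p},\preccurlyeq_{\Psi\ast q}\rangle$, the lowest rank of the output is $T_1=\bigcup_{j\in a_{\mathbf{P}}(1)}\min(\preccurlyeq_j,W)$ with $\varnothing\neq a_{\mathbf{P}}(1)\subseteq\{1,2\}$, and $\min(\oplus\mathbf{P},W)=T_1$. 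Since each set being unioned is either $\{p\bar q\}$ or $\{\bar pq\}$ and $a_{\mathbf{P}}(1)$ is nonempty, $T_1$ contains at least one of $p\bar q$, $\bar pq$, neither of which lies in $\mods{\bigwedge S}=\{pq\}$. Hence $\min(\oplus\mathbf{P},W)\nsubseteq\mods{\bigwedge S}$, as required. Note the argument is uniform over all TQ aggregators $\oplus$ (and in particular covers $\STQ$, where $T_1=\{p\bar q,\bar pq\}$), which is exactly the point needed to motivate step (ii).

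There is no deep obstacle here, as this is an existence statement dispatched by one small configuration. The only points requiring care are (i) choosing a prior order whose per-input minimal worlds all avoid $\mods{\bigwedge S}$, which is precisely what the ``mixed below joint'' ranking guarantees, and (ii) correctly citing that the first rank $T_1$ of any TQ aggregate is a nonempty union of input minima, so that the failure of \KRevP{2} at the aggregation stage is genuinely independent of which TQ aggregator is used.
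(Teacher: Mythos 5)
Your proof is correct and takes essentially the same approach as the paper: an explicit four-world countermodel whose per-input minimal sets escape $\mods{\bigwedge S}$, so that the bottom rank of the aggregate, being a nonempty union of those minimal sets, cannot be contained in $\mods{\bigwedge S}$. The paper's own example (prior $x\prec_\Psi\{y,z,w\}$ with $\mods{A}=\{z,w\}$, $\mods{B}=\{y,w\}$) only computes the $\STQ$ case explicitly, whereas you note that your configuration works uniformly for every TQ aggregator --- a mild strengthening, but the same underlying idea.
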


\noindent  In the special case in which the single sentence revision operator $\ast$ is one that identifies belief states with TPOs (such as the natural, lexicographic or again restrained revision operators), the obvious choice of transformation in step (ii) would simply be a serial revision by $\bigwedge S$. We could take this revision to use the same serial revision operator as the one used in the first step. Alternatively, we could choose the natural revision operator $\astN$, which has some attractive properties and  involves the absolute minimal change required to get the job done.
Either way, the proposal would schematically be:


\begin{tabbing}
\=BLAHBL\=\kill

\>  \MultiRevAggregPrec   \>  $\preccurlyeq_{\Psi\circledast S}= \oplus\{\preccurlyeq_{\Psi\ast A_1},\ldots, \preccurlyeq_{\Psi\ast A_n}\}\ast' \bigwedge S$ \\[-0.25em]

\end{tabbing} 
\vspace{-1em}

\noindent  This leads us to the following definition:

\begin{definition}
\label{def:TQPackageRevision}
$\circledast$ is a {\em TeamQueue (TQ; resp.~Synchronous  TeamQueue) parallel revision operator} if and only if it is a parallel revision operator such that there exists two AGM and DP revision operators $\ast$ and $\ast'$ and a TeamQueue aggregator $\oplus$ (resp.~Synchronous TeamQueue aggregator $\STQ$), such that, for all $\Psi$ and $S\subseteq L$, $\preccurlyeq_{\Psi\circledast S}$ is  defined from $\ast$ and $\ast'$ by \MultiRevAggregPrec, using $\oplus$ (resp.~$\STQ$). \end{definition}
%
%
%

%
%
%

\noindent To illustrate how this approach might work, we depict in Fig. \ref{KPPModel} a plausible model of Example  \ref{ex:KPPRevise}. We note that it gives us  the correct intuitive outcome: we find that $A, B\notin\bel{\Psi}$ and $A\in\bel{(\Psi\circledast \{A, B\})\circledast\{\neg B\}}$, as required. The immediate upshot of this illustration, of course, is:

\begin{figure}
  \centering
    \includegraphics[width=0.3\textwidth]{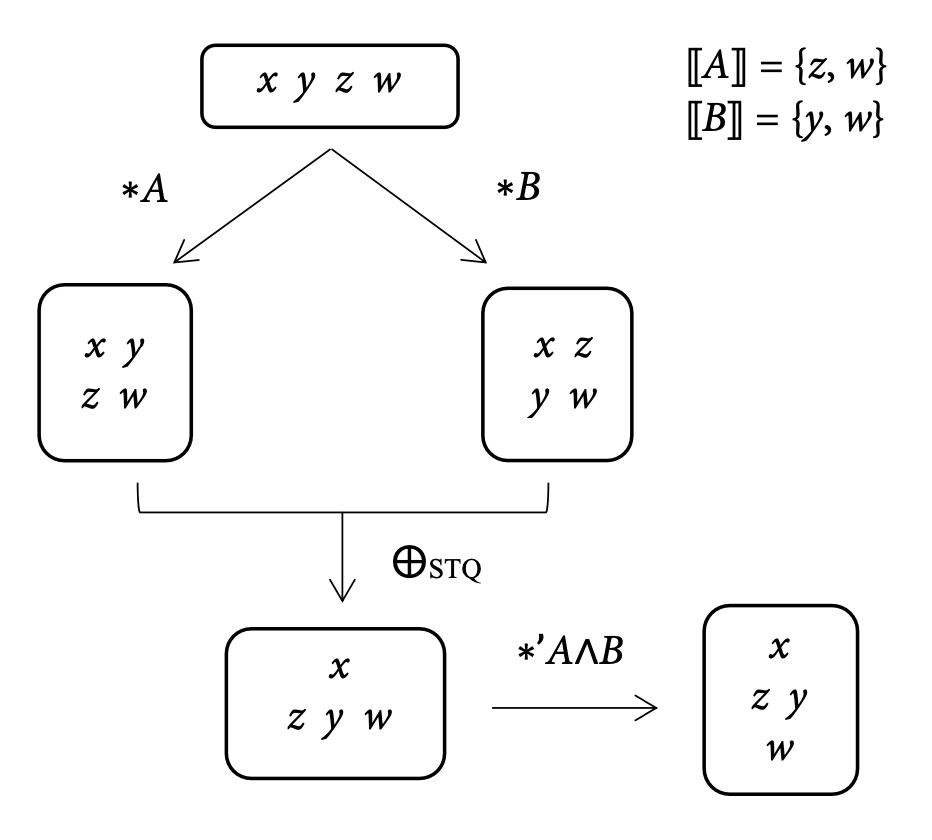}
  \caption{Model of Example \ref{ex:KPPRevise}, using  the TQ approach. 
  The correct result is obtained here: $A, B\notin\bel{\Psi}$ and $A\in\bel{(\Psi\circledast \{A, B\})\circledast \{\neg B\}}$, since $\min(\preccurlyeq_{\Psi\circledast \{A, B\}}, \mods{\neg B})= \{z\}\subseteq\{z, w\}=\mods{A}$.}
  \label{KPPModel}
\end{figure}

\begin{restatable}{prop}{NoStrongTwo}
\label{prop:NoStrongTwo}
 \CRevRnPlus{2} fails for  (even Synchronous) TQ parallel revision operators.
\end{restatable}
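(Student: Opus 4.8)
The plan is to refute \CRevRnPlus{2} by exhibiting a single explicit Synchronous TQ parallel revision operator, together with a state, an input set and a pair of worlds at which the postulate fails; since $\STQ$ is itself a TQ aggregator, one such witness simultaneously establishes the claim for the whole family, the parenthetical ``even Synchronous'' merely flagging that the most symmetric member already misbehaves. I would work over two atoms $a,b$, with worlds $ab,a\bar b,\bar ab,\bar a\bar b$, take $S=\{a,b\}$ so that $\mods{\bigwedge S}=\{ab\}$, and let both serial operators $\ast$ and $\ast'$ of Definition \ref{def:TQPackageRevision} be natural revision $\astN$. This is legitimate: $\astN$ is an AGM and DP operator and, since it identifies states with TPOs, applying $\ast'$ to the aggregated order in step (ii) is well defined.

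The concrete witness I would use is the prior total preorder $\bar a\bar b\prec_\Psi a\bar b\prec_\Psi \bar ab\prec_\Psi ab$. First I would compute the two component revisions. The most plausible $a$-world is $a\bar b$ and the most plausible $b$-world is $\bar ab$, so natural revision drops each to the bottom while freezing the rest, giving $\preccurlyeq_{\Psi\astN a}$ as $a\bar b\prec\bar a\bar b\prec\bar ab\prec ab$ and $\preccurlyeq_{\Psi\astN b}$ as $\bar ab\prec\bar a\bar b\prec a\bar b\prec ab$. I would then run $\STQ$ rank by rank: the first output rank collects both minima, namely $\{a\bar b,\bar ab\}$; with these removed the two orders agree on $\bar a\bar b\prec ab$, so the next rank is $\{\bar a\bar b\}$ and the last is $\{ab\}$, giving the aggregate $\{a\bar b,\bar ab\}\prec\bar a\bar b\prec ab$. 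Finally I would revise this aggregate by $\bigwedge S=a\wedge b$ with $\astN$: as $\mods{a\wedge b}=\{ab\}$, this only moves $ab$ to the bottom (most plausible) rank, yielding $\preccurlyeq_{\Psi\circledast S}$ equal to $ab\prec\{a\bar b,\bar ab\}\prec\bar a\bar b$.

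The violation then reads off immediately. Take $x=\bar a\bar b$ and $y=a\bar b$, both of which falsify $\bigwedge S$. In the prior we have $\bar a\bar b\prec_\Psi a\bar b$, hence $x\preccurlyeq_\Psi y$; but in $\Psi\circledast S$ we have $a\bar b\prec\bar a\bar b$, so $x\preccurlyeq_{\Psi\circledast S}y$ fails. This breaks the biconditional demanded by \CRevRnPlus{2}.

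I expect the only real subtlety to be arguing that the disturbance introduced by aggregation is not healed by the concluding $\astN$-revision. The mechanism driving the counterexample is that the $\bigwedge S$-falsifying worlds $a\bar b$ and $\bar ab$ are each dragged to the bottom of a \emph{component} revision and therefore land together in the bottom $\STQ$-rank, so that the prior-minimal $\bigwedge S$-falsifying world $\bar a\bar b$ is pushed strictly above them --- precisely the reordering among $\neg\bigwedge S$-worlds that \CRevRnPlus{2} forbids. That this survives step (ii) is guaranteed because $\astN$, being a DP operator, satisfies \CRevR{2}, so revising by $\bigwedge S$ preserves the relative order of all worlds in $\mods{\neg\bigwedge S}$; the final revision only relocates $ab$ and leaves the offending triple ordered exactly as the aggregator left it. The remaining computations are routine, and I would relegate them to verification.
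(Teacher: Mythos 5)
Your proof is correct and takes essentially the same approach as the paper: the paper establishes this proposition by exhibiting a concrete four-world countermodel (its Figure~2 model of Example~1, built from STQ aggregation of two DP serial revisions followed by a final revision by $\bigwedge S$), and your explicit construction with $\astN$ is the same kind of witness. The only cosmetic difference is that you read the failure directly off the semantic biconditional among the $\neg\bigwedge S$-worlds, whereas the paper reads it off via the belief-set consequence $A\in\bel{(\Psi\circledast\{A,B\})\circledast\{\neg B\}}$ with $A\notin\bel{\Psi\circledast\{\neg B\}}$.
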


\noindent What then of the other properties we previously discussed? First, it yields \MultiRevInter~as its single-step special case:

\begin{restatable}{prop}{AggRedrev}
\label{AggRedrev}
If  $\circledast$ is  a TQ parallel revision operator, then it satisfies \MultiRevInter. 
\end{restatable}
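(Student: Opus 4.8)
The plan is to verify \MultiRevInter~at the level of minimal worlds, exploiting the fact that for AGM operators a belief set is determined by the minimum of its associated TPO (i.e.~$\mods{\bel{\Psi}}=\min(\preccurlyeq_\Psi,W)$). Writing $\preccurlyeq_{\oplus}$ for the aggregate $\oplus\{\preccurlyeq_{\Psi\ast A_1},\ldots,\preccurlyeq_{\Psi\ast A_n}\}$, the defining equation \MultiRevAggregPrec~together with the single-step (KM) behaviour of the AGM operator $\ast'$ gives $\min(\preccurlyeq_{\Psi\circledast S},W)=\min(\preccurlyeq_{\oplus}\ast'\bigwedge S,W)=\min(\preccurlyeq_{\oplus},\mods{\bigwedge S})$. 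So the models of $\bel{\Psi\circledast S}$ are exactly $\min(\preccurlyeq_{\oplus},\mods{\bigwedge S})$.

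For the right-hand side I would first observe that the profile feeding the aggregator is now the singleton $\langle\preccurlyeq_{\Psi\ast\bigwedge S}\rangle$, and that any TQ aggregator returns its sole input unchanged: with $n=1$ the only admissible choice is $a_{\mathbf{P}}(i)=\{1\}$, so the construction merely reproduces the ordered partition of $\preccurlyeq_{\Psi\ast\bigwedge S}$. Hence $\preccurlyeq_{\Psi\circledast\{\bigwedge S\}}=\preccurlyeq_{\Psi\ast\bigwedge S}\ast'\bigwedge S$, and the KM property of $\ast'$ reduces its minimum to $\min(\preccurlyeq_{\Psi\ast\bigwedge S},\mods{\bigwedge S})$. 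Since success of $\ast$ already places $\min(\preccurlyeq_{\Psi\ast\bigwedge S},W)$ inside $\mods{\bigwedge S}$, this equals $\min(\preccurlyeq_{\Psi\ast\bigwedge S},W)$, which by the KM property of $\ast$ is $\min(\preccurlyeq_{\Psi},\mods{\bigwedge S})$. The claim therefore reduces to the single identity $\min(\preccurlyeq_{\oplus},\mods{\bigwedge S})=\min(\preccurlyeq_{\Psi},\mods{\bigwedge S})$.

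The core step is to establish this identity using the factoring property \FacMin~that characterises TQ aggregators. Applying \FacMin~to $\preccurlyeq_{\oplus}$ at the set $\mods{\bigwedge S}$ yields some $X\subseteq I$ with $\min(\preccurlyeq_{\oplus},\mods{\bigwedge S})=\bigcup_{j\in X}\min(\preccurlyeq_{\Psi\ast A_j},\mods{\bigwedge S})$, and $X$ is nonempty whenever $\mods{\bigwedge S}\neq\varnothing$. Since $\mods{\bigwedge S}\subseteq\mods{A_j}$ for every $j$, the DP postulate \CRevR{1} guarantees that $\preccurlyeq_{\Psi\ast A_j}$ and $\preccurlyeq_{\Psi}$ agree on $\mods{\bigwedge S}$; as a minimal set depends only on the restriction of the order to the set in question, $\min(\preccurlyeq_{\Psi\ast A_j},\mods{\bigwedge S})=\min(\preccurlyeq_{\Psi},\mods{\bigwedge S})$ for each $j$. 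The union over the nonempty $X$ therefore collapses to $\min(\preccurlyeq_{\Psi},\mods{\bigwedge S})$, as required.

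I expect the only real friction to come from the degenerate cases rather than from the main line of argument. If $\bigwedge S$ is inconsistent then $\mods{\bigwedge S}=\varnothing$ and the minimal-set computation is vacuous, but here success forces both $\bel{\Psi\circledast S}$ and $\bel{\Psi\circledast\{\bigwedge S\}}$ to be the inconsistent belief set, so the equality still holds; the empty input $S=\varnothing$ is handled by the convention $\bigwedge\varnothing=\top$. Beyond these, the proof uses only the KM representation of $\ast'$ and the first DP postulate \CRevR{1} of $\ast$, so it goes through for arbitrary (not merely synchronous) TQ parallel revision operators.
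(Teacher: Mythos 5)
Your proof is correct and takes essentially the same route as the paper's: both reduce \MultiRevInter, via the KM representation of $\ast'$ and $\ast$, to the single identity $\min(\preccurlyeq_{\oplus},\mods{\bigwedge S})=\min(\preccurlyeq_{\Psi},\mods{\bigwedge S})$, and both obtain it by observing that \CRevR{1} forces $\min(\preccurlyeq_{\Psi\ast A_i},\mods{\bigwedge S})=\min(\preccurlyeq_{\Psi},\mods{\bigwedge S})$ for every $i$ and then invoking a structural property of TQ aggregation. The only cosmetic difference is that you apply the factoring property \FacMin~directly (and spell out the singleton-profile and inconsistent-input cases more explicitly), where the paper instead invokes its two consequences, the bounding properties \UBO and \LBO, one for each inclusion.
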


\noindent Second, we recover Delgrande \& Jin's plausible strong principles  \PCRevRn{3} and  \PCRevRn{4}: 

\begin{restatable}{prop}{PCR}
\label{prop:PCR}
If  $\circledast$ is  a TQ parallel revision operator, then it satisfies  \PCRevRn{3} and   \PCRevRn{4}. 
\end{restatable}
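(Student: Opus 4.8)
The plan is to exploit the two-stage structure of \MultiRevAggregPrec~(Definition \ref{def:TQPackageRevision}): first show that the aggregated relation $\preccurlyeq_{\oplus} := \oplus\{\preccurlyeq_{\Psi\ast A_1},\ldots,\preccurlyeq_{\Psi\ast A_n}\}$ already satisfies the required monotonicity, and then propagate this through the concluding serial revision by $\bigwedge S$. Throughout, fix $x,y$ with $(S\!\mid\! y)\subseteq (S\!\mid\! x)$ and abbreviate $\preccurlyeq_i := \preccurlyeq_{\Psi\ast A_i}$.

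First I would establish a claim about the \emph{individual} revised orders: under the hypothesis $(S\!\mid\! y)\subseteq (S\!\mid\! x)$, we have (a) if $x\preccurlyeq_\Psi y$ then $x\preccurlyeq_i y$ for every $i$, and (b) if $x\prec_\Psi y$ then $x\prec_i y$ for every $i$. This follows from a case split on the membership of $x,y$ in $\mods{A_i}$. The inclusion $(S\!\mid\! y)\subseteq (S\!\mid\! x)$ rules out the configuration $x\not\models A_i$ and $y\models A_i$, leaving three possibilities: if $x,y\models A_i$, apply \CRevR{1}; if $x,y\models\neg A_i$, apply \CRevR{2}; and if $x\models A_i$, $y\models\neg A_i$, apply \CRevR{4} for (a) and \CRevR{3} for (b). In each case the prior (strict) relationship is preserved in $\preccurlyeq_i$.

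Next I would transfer this to the aggregate via the factoring property \FacPref, which every TQ aggregator satisfies. Instantiating \FacPref~with $x_1 = \cdots = x_n = x$, the premise $x_i\preccurlyeq_i y$ holds by claim (a), so there is some $j$ with: if $x\prec_j y$ then $x\prec_{\oplus} y$, and if $x\preccurlyeq_j y$ then $x\preccurlyeq_{\oplus} y$. For \PCRevRn{4} we have $x\preccurlyeq_j y$ by (a), hence $x\preccurlyeq_{\oplus} y$; for \PCRevRn{3} we have $x\prec_j y$ by (b), hence $x\prec_{\oplus} y$. Thus $\preccurlyeq_{\oplus}$ satisfies the ``pre-revision'' analogues of \PCRevRn{3} and \PCRevRn{4}.

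Finally I would push these relations through the concluding revision $\preccurlyeq_{\Psi\circledast S} = \preccurlyeq_{\oplus}\ast'\bigwedge S$, where $\ast'$ is a DP operator so that \CRevR{1} to \CRevR{4} apply with input $\bigwedge S$. A model of $\bigwedge S$ is exactly a world satisfying all of $S$, i.e.~one with $(S\!\mid\!\cdot) = S$; hence the excluded configuration $x\not\models\bigwedge S$, $y\models\bigwedge S$ would force $(S\!\mid\! y) = S\nsubseteq (S\!\mid\! x)$ and cannot occur. The remaining three configurations are handled by \CRevR{1} (both models of $\bigwedge S$), \CRevR{2} (neither a model), and \CRevR{3} together with \CRevR{4} ($x$ a model, $y$ not), each of which carries $x\preccurlyeq_{\oplus} y$ to $x\preccurlyeq_{\Psi\circledast S} y$ and $x\prec_{\oplus} y$ to $x\prec_{\Psi\circledast S} y$. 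Combining with the previous paragraph yields \PCRevRn{3} and \PCRevRn{4}. I expect the main obstacle to be the bookkeeping in the first claim---verifying that every index falls into one of the three ``good'' configurations and that the matching DP postulate delivers exactly the preservation needed---rather than any single deep step, since the factoring property does the real work once the uniform premise $x\preccurlyeq_i y$ is secured; a minor point to dispatch is the degenerate case where $\bigwedge S$ is inconsistent, where every world is a $\neg\bigwedge S$-model and \CRevR{2} alone suffices.
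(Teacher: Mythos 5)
Your proof is correct and follows essentially the same route as the paper's: the identical three-way case split on the membership of $x,y$ in $\mods{A_i}$ (the inclusion $(S\!\mid\! y)\subseteq (S\!\mid\! x)$ ruling out the fourth configuration, with \CRevR{1}, \CRevR{2}, \CRevR{3}/\CRevR{4} handling the rest), then a unanimity transfer to the aggregate, then preservation through the final revision by $\bigwedge S$. The only cosmetic difference is that you obtain the unanimity step from the factoring property \FacPref~instantiated with $x_1=\cdots=x_n=x$, whereas the paper invokes the Pareto properties \SPU~and \WPU~directly; your instantiation just re-derives those two properties, so the arguments coincide.
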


\noindent We've seen that \PCRevRn{3} and \PCRevRn{4} jointly entail \CRevRn{1}--\CRevRn{4}. Proposition \ref{prop:PCR} therefore gives us:

\begin{restatable}{lem}{RtoRPack}
\label{lem:RtoRPack}
If  $\circledast$ is  a TQ parallel revision operator, then it satisfies \CRevRn{1}, \CRevRn{2}, \CRevRn{3} and \CRevRn{4}.

\end{restatable}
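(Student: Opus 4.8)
The plan is to read the lemma off as an immediate corollary of two results already established, with no new construction required. First I would invoke Proposition \ref{prop:PCR}, which guarantees that every TQ parallel revision operator $\circledast$ satisfies the strong semantic principles \PCRevRn{3} and \PCRevRn{4}. What remains is purely a matter of logical descent: showing that these two principles jointly entail the weaker \CRevRn{1}--\CRevRn{4}. That entailment, however, is precisely the statement of Proposition \ref{prop:PCtoC}, so a single appeal to it finishes the proof.

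For completeness I would recall the short reasoning underlying Proposition \ref{prop:PCtoC}. The pivotal observation is that whenever $(S\!\mid\! x) = (S\!\mid\! y)$, one recovers the biconditional that $x \preccurlyeq_\Psi y$ iff $x \preccurlyeq_{\Psi\circledast S} y$. The forward direction is a direct instance of \PCRevRn{4}, the hypothesis $(S\!\mid\! y)\subseteq(S\!\mid\! x)$ holding by equality; the reverse direction uses \PCRevRn{3} together with completeness of the total preorders, since if $x\not\preccurlyeq_\Psi y$ then $y\prec_\Psi x$, whence \PCRevRn{3} yields $y\prec_{\Psi\circledast S} x$, contradicting $x\preccurlyeq_{\Psi\circledast S} y$. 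Then \CRevRn{1} is the case $x,y\in\mods{\bigwedge S}$, where $(S\!\mid\! x) = S = (S\!\mid\! y)$, and \CRevRn{2} is the case $x,y\in\mods{\bigwedge\neg S}$, where $(S\!\mid\! x) = \varnothing = (S\!\mid\! y)$; while \CRevRn{3} and \CRevRn{4} are literal instances of \PCRevRn{3} and \PCRevRn{4}, since $x\in\mods{\bigwedge S}$ and $y\notin\mods{\bigwedge S}$ force $(S\!\mid\! y)\subseteq S = (S\!\mid\! x)$.

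Because both ingredients are already in hand, there is no substantive obstacle inside this proof; the real effort lies upstream, in Proposition \ref{prop:PCR}, which certifies that the aggregate-then-revise construction respects the Delgrande--Jin principles in the first place. The one point in the descent that rewards care is the reverse half of the biconditional for \CRevRn{1} and \CRevRn{2}: it is not supplied by \PCRevRn{4} alone, but needs the contrapositive use of \PCRevRn{3} and the completeness of $\preccurlyeq_\Psi$, as noted above.
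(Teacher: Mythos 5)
Your proof is correct and takes essentially the same route as the paper: the lemma is obtained there, without further argument, by combining Proposition \ref{prop:PCR} with Proposition \ref{prop:PCtoC}, exactly as you do. Your added elaboration of why \PCRevRn{3} and \PCRevRn{4} yield the biconditionals in \CRevRn{1} and \CRevRn{2} (using the contrapositive of \PCRevRn{3} and completeness) is a sound filling-in of a step the paper calls obvious.
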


\noindent We can also recover the parallel version of  \IndRevR, on the assumption that we are proceeding from serial revision operators that satisfy \IndRevR:

\begin{restatable}{prop}{indep}
\label{prop:Indep}
Let  $\circledast$ be a TQ parallel revision operator defined from AGM and DP serial revision operators $\ast$ and $\ast'$. Then, if $\ast$ and $\ast'$ satisfy  \IndRevR, then $\circledast$ satisfies  \IndRevRn.
\end{restatable}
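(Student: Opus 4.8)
The plan is to exploit the two-stage structure of \MultiRevAggregPrec: writing $S=\{A_1,\dots,A_n\}$, one first forms the TQ aggregate $\preccurlyeq_\oplus := \oplus\{\preccurlyeq_{\Psi\ast A_1},\ldots,\preccurlyeq_{\Psi\ast A_n}\}$, and then serially revises it by $\bigwedge S$ using $\ast'$ to obtain $\preccurlyeq_{\Psi\circledast S}$. Since the hypotheses of \IndRevRn~give $x\in\mods{\bigwedge S}$ and $y\notin\mods{\bigwedge S}$ (so $y\in\mods{\neg\bigwedge S}$), the desired conclusion $x\prec_{\Psi\circledast S}y$ follows immediately from \IndRevR~applied to $\ast'$ at the revision step --- \emph{provided} we can first establish $x\preccurlyeq_\oplus y$. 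Everything therefore reduces to this single dominance fact about the aggregate, with the final strictness supplied entirely by the independence behaviour of $\ast'$.

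To show $x\preccurlyeq_\oplus y$, I would first argue that $x\preccurlyeq_i y$ in every input TPO $\preccurlyeq_i:=\preccurlyeq_{\Psi\ast A_i}$. Since $x\in\mods{\bigwedge S}$ we have $x\in\mods{A_i}$ for each $i$, so there are two cases. If $y\in\mods{A_i}$, then $x,y\in\mods{A_i}$ and \CRevR{1} (which holds because $\ast$ is a DP operator) yields $x\preccurlyeq_i y$ from $x\preccurlyeq_\Psi y$. If instead $y\in\mods{\neg A_i}$, then $x\in\mods{A_i}$, $y\in\mods{\neg A_i}$ and $x\preccurlyeq_\Psi y$, so \CRevR{4} gives $x\preccurlyeq_i y$ (the assumed \IndRevR~for $\ast$ would even give the stronger $x\prec_i y$, but $\preccurlyeq_i$ is all that is needed). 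Hence $x\preccurlyeq_i y$ for all $i$.

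With this in hand, I would invoke the factoring property \FacPref~of TQ aggregators, instantiating each of $x_1,\dots,x_n$ as the single world $x$. Its hypothesis $x_i\preccurlyeq_i y$ is exactly what the previous step delivered, so it returns an index $j$ for which clause (ii) applies: since $x\preccurlyeq_j y$, we obtain $x\preccurlyeq_\oplus y$. Finally, applying \IndRevR~to $\ast'$ --- with prior TPO $\preccurlyeq_\oplus$, input $\bigwedge S$, and the facts $x\in\mods{\bigwedge S}$, $y\in\mods{\neg\bigwedge S}$, $x\preccurlyeq_\oplus y$ --- delivers $x\prec_{\Psi\circledast S}y$, which is \IndRevRn.

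The main obstacle is the middle step: correctly harnessing \FacPref~to pass from ``$x\preccurlyeq_i y$ for all $i$'' to ``$x\preccurlyeq_\oplus y$''. This is really a weak-Pareto/dominance property of the aggregator, and the delicate point is that \FacPref~only guarantees a single representative index $j$ whose verdict the output respects; one must check that it is clause (ii), not (i), that we need, and that it fires because $x\preccurlyeq_j y$ holds for the $j$ the property hands us. It is worth noting that the argument uses only the DP postulates for $\ast$ and reserves the full strength of \IndRevR~for $\ast'$ alone.
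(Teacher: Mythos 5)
Your proof is correct and follows the same three-step skeleton as the paper's own proof: establish $x \preccurlyeq_{\Psi\ast A_i} y$ for every input TPO, transfer this dominance through the aggregator, and let \IndRevR applied to $\ast'$ supply the final strict preference at the post-aggregation revision step of \MultiRevAggregPrec. There are two local but genuine differences worth recording. First, where $y\in\mods{\neg A_i}$, the paper invokes \IndRevR for $\ast$ to get $x \prec_{\Psi\ast A_i} y$ and then immediately weakens this to $x \preccurlyeq_{\Psi\ast A_i} y$; you instead use the DP postulate \CRevR{4}, which gives the weak inequality directly. This makes visible that the proposition's hypothesis is stronger than necessary: $\ast$ need only be a DP operator, with \IndRevR required of $\ast'$ alone. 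Second, for the transfer step the paper cites the unanimity property \WPU of TeamQueue aggregators (imported from the companion paper on contraction), whereas you derive the same conclusion from the factoring property \FacPref stated in the main text, instantiating all of $x_1,\dots,x_n$ as $x$ so that, for whichever index $j$ the property returns, its second clause fires and yields $x \preccurlyeq_\oplus y$. Since \WPU follows from \FacPref by exactly this argument, the two are interchangeable here; your route is marginally more self-contained. A final remark in your favour: the paper's proof asserts that \WPU yields $x \prec_\oplus y$, which must be a typo for $x \preccurlyeq_\oplus y$ (unanimity of weak preferences cannot produce strictness), and your version states the aggregate conclusion correctly --- $x \preccurlyeq_\oplus y$ is all that the final application of \IndRevR to $\ast'$ requires.
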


\noindent Finally, another key principle can also be recovered:

\begin{restatable}{prop}{SSoundForTQ}
\label{prop:SSoundForTQ}
If $\circledast$ is a TQ parallel revision operator,  then it satisfies \EssRev.
\end{restatable}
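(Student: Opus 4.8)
The plan is to prove the minimal-set presentation \EssRevMin, which is equivalent to the stated belief-set form \EssRev once we know $\circledast$ satisfies \MultiRevInter~(Proposition \ref{AggRedrev}). Writing $D=\bigwedge(S_1\cup S_2)$ and $U=S_1\cup\neg S_2$, \MultiRevInter~together with the singleton reduction and the AGM representation tells us that the models of $\bel{\Psi\circledast(S_1\cup S_2)}$ are exactly $\min(\preccurlyeq_{\Psi},\mods{D})$, while the models of $\bel{(\Psi\circledast U)\circledast(S_1\cup S_2)}$ are exactly $\min(\preccurlyeq_{\Psi\circledast U},\mods{D})$; hence the two belief sets coincide precisely when these two minimal sets do, which is what \EssRevMin~asserts. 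If $D$ is inconsistent both sides are empty, so I would assume $\mods{D}\neq\varnothing$ (and $U\neq\varnothing$, i.e.\ nonempty input).

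Writing $C=\bigwedge U$ and $\preccurlyeq_{\oplus}=\oplus\{\preccurlyeq_{\Psi\ast A}\}_{A\in U}$, \MultiRevAggregPrec~gives $\preccurlyeq_{\Psi\circledast U}=\preccurlyeq_{\oplus}\ast'C$. First I would observe that $\mods{D}$ lies entirely on one side of the $C/\neg C$ partition: every model of $D$ satisfies each $A\in S_1$ and each $B\in S_2$, so when $S_2\neq\varnothing$ it falsifies $C$, giving $\mods{D}\subseteq\mods{\neg C}$, whereas if $S_2=\varnothing$ then $C=D$ and $\mods{D}=\mods{C}$. Since $\ast'$ is a DP operator, \CRevR{1} and \CRevR{2} guarantee that revising $\preccurlyeq_{\oplus}$ by $C$ leaves the internal order of each of $\mods{C}$ and $\mods{\neg C}$ intact; as $\mods{D}$ sits inside one such region, $\preccurlyeq_{\Psi\circledast U}$ and $\preccurlyeq_{\oplus}$ restrict to the same order on $\mods{D}$, whence $\min(\preccurlyeq_{\Psi\circledast U},\mods{D})=\min(\preccurlyeq_{\oplus},\mods{D})$.

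Next I would show that each input TPO already agrees with the prior on $\mods{D}$. For $A\in S_1$ we have $\mods{D}\subseteq\mods{A}$, so \CRevR{1} applied to $\ast$ makes $\preccurlyeq_{\Psi\ast A}$ and $\preccurlyeq_{\Psi}$ coincide on $\mods{D}$; for $A=\neg B$ with $B\in S_2$ we have $\mods{D}\subseteq\mods{\neg A}$, so \CRevR{2} yields the same conclusion. Thus $\min(\preccurlyeq_{\Psi\ast A},\mods{D})=\min(\preccurlyeq_{\Psi},\mods{D})$ for every $A\in U$. The final step feeds this into the factoring property \FacMin: there is an index set $X$ (among the indices of $U$) with $\min(\preccurlyeq_{\oplus},\mods{D})=\bigcup_{j\in X}\min(\preccurlyeq_{\Psi\ast A_j},\mods{D})$, and $X\neq\varnothing$ because $\mods{D}\neq\varnothing$ forces a nonempty left-hand minimum. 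Since every term of this union equals $\min(\preccurlyeq_{\Psi},\mods{D})$, so does the union, giving $\min(\preccurlyeq_{\oplus},\mods{D})=\min(\preccurlyeq_{\Psi},\mods{D})$. Chaining with the previous paragraph yields $\min(\preccurlyeq_{\Psi\circledast U},\mods{D})=\min(\preccurlyeq_{\Psi},\mods{D})$, which is \EssRevMin.

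The routine parts are the DP bookkeeping: checking which side of $C$ the set $\mods{D}$ lands on, and that revision preserves the intra-region order and hence the minimum there. The conceptual crux, and the one step that genuinely uses the TeamQueue structure rather than DP alone, is the last one: collapsing the aggregated minimum on $\mods{D}$ to the common prior minimum via \FacMin. The main thing to get right there is that factoring returns a \emph{nonempty} index set and that every selected input already agrees with $\preccurlyeq_{\Psi}$ on $\mods{D}$, so that the particular witnessing $X$ is immaterial.
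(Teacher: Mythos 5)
Your proof is correct and follows essentially the same route as the paper's: show each input TPO $\preccurlyeq_{\Psi\ast A}$ agrees with $\preccurlyeq_{\Psi}$ on the target model set via \CRevR{1}/\CRevR{2}, collapse the aggregated minimum to the prior minimum via \FacMin, and then argue that the post-aggregation revision by $\ast'$ cannot disturb that minimum since the target models lie on one side of the revision input (again by \CRevR{1}/\CRevR{2}). The only differences are cosmetic — you order the steps differently and are slightly more careful than the paper about the degenerate cases ($S_2=\varnothing$ and the nonemptiness of the factoring index set $X$).
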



\noindent Regarding the principles that {\em don't} hold, we've already seen that \CRevRnPlus{2}  fails. Thankfully, the same applies to \PeeRev:

\begin{restatable}{prop}{PFails}
\label{prop:PFails}
\PeeRev~fails for (even Synchronous) TQ parallel revision operators.
\end{restatable}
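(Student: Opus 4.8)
The plan is to refute \PeeRev by constructing an explicit synchronous TQ parallel revision operator, together with a state $\Psi$ and input sets $S_1,S_2$ with $S_1\cup S_2$ consistent, for which the (equivalent) semantic form \PeeRevMin fails. The natural vehicle is Example~\ref{ex:KPPTwo}: I would work in the language generated by the two atoms $A$ (adder) and $B$ (multiplier), with worlds $\{AB, A\neg B, \neg AB, \neg A\neg B\}$, and let $\preccurlyeq_\Psi$ encode the belief that the components are correlated, with no information about which way round, i.e.~$\min(\preccurlyeq_\Psi,W)=\{AB,\neg A\neg B\}$ sitting strictly below $\{A\neg B,\neg AB\}$ so that $\bel{\Psi}=\Cn(A\leftrightarrow B)$. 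I would then take $S_1=\{A\}$ and $S_2=\{\neg B\}$, so that $S_1\cup\neg S_2=\{A,B\}$, while $S_1\cup S_2=\{A,\neg B\}$ is consistent; here \PeeRevMin demands $\min(\preccurlyeq_{\Psi\circledast\{A,B\}},\mods{\neg B})\subseteq\mods{A}$, and the goal is to show this inclusion fails.

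First I would instantiate Definition~\ref{def:TQPackageRevision} with $\ast=\ast'=\astN$ (natural revision) and $\oplus=\STQ$, and compute the two input TPOs $\preccurlyeq_{\Psi\ast A}$ and $\preccurlyeq_{\Psi\ast B}$. By the symmetry of $\preccurlyeq_\Psi$ in $A$ and $B$ and the minimal-change behaviour of natural revision, both come out as the ordered partition $AB\prec\neg A\neg B\prec\{A\neg B,\neg AB\}$. Synchronously aggregating two identical orders via $\STQ$ returns that same order, so $\preccurlyeq_\oplus$ has $\neg A\neg B\prec_\oplus A\neg B$; in particular $\min(\preccurlyeq_\oplus,\mods{\neg B})=\{\neg A\neg B\}$.

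The key observation --- and the step that makes the counterexample survive the extra revision built into \MultiRevAggregPrec --- is that the final transformation $\ast'\bigwedge S = \ast'(A\wedge B)$ leaves the relative order of the two $\neg B$-worlds untouched. Since $A\neg B$ and $\neg A\neg B$ both lie in $\mods{\neg(A\wedge B)}$, and $\ast'$ satisfies \CRevR{2}, their order in $\preccurlyeq_{\Psi\circledast\{A,B\}}=\preccurlyeq_\oplus\ast'(A\wedge B)$ agrees with their order in $\preccurlyeq_\oplus$. As $\mods{\neg B}\subseteq\mods{\neg(A\wedge B)}$, the restriction of the order to $\mods{\neg B}$ is preserved, so $\min(\preccurlyeq_{\Psi\circledast\{A,B\}},\mods{\neg B})=\{\neg A\neg B\}$. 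This set is not contained in $\mods{A}=\mods{S_1}$, which refutes \PeeRevMin, and hence \PeeRev, even for a synchronous operator. As a sanity check, the induced belief set $\bel{(\Psi\circledast\{A,B\})\circledast\{\neg B\}}=\Cn(\neg A\wedge\neg B)$ correctly drops $A$, matching the intuition of Example~\ref{ex:KPPTwo}.

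I expect the only genuinely delicate step to be the appeal to \CRevR{2} to neutralise the $\ast'$-revision; everything else is a routine rank computation. I would therefore state that observation explicitly, flagging the inclusion $\mods{\neg B}\subseteq\mods{\neg(A\wedge B)}$, and otherwise simply record the ordered partitions at each stage and read off the failing minimal set.
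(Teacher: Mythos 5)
Your proposal is correct and takes essentially the same route as the paper: the paper's proof is exactly a countermodel corresponding to Example~\ref{ex:KPPTwo} (given there via Figure~\ref{fig:PFails}), with initial minimal worlds $\mods{A\leftrightarrow B}$, revision by $\{A,B\}$, and $\min(\preccurlyeq_{\Psi\circledast\{A,B\}},\mods{\neg B})$ landing on the $\neg A\neg B$-world, hence outside $\mods{A}$. Your version merely makes explicit what the paper delegates to the figure --- the choice of $\astN$ and $\STQ$, and the \CRevR{2} argument that the post-aggregation revision by $A\wedge B$ cannot disturb the order of the $\neg B$-worlds --- which is a sound and welcome elaboration, not a different argument.
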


\section{Concluding comments}
\label{sec:Concl}

Order aggregation, particularly when implemented using Booth \& Chandler $\STQ$ operator, provides a fruitful approach to iterated parallel revision. It yields a principled way to construct  revision operators that satisfy the plausible postulates proposed in \cite{DelgrandeJames2012PbrR},  without validating more questionable ones. Using this approach to extend a serial revision operator that identifies epistemic states with TPOs (e.g. the natural, restrained or lexicographic revision operators) covers indefinitely many iterations of revision. 

In terms of related future research, the most obvious results to seek would be {\em characterisations} of various noteworthy classes of TQ parallel revision operators, based on different interesting classes of serial revision operators, such as, for example, the class of serial revision operators satisfying both the AGM and the DP postulates. At this stage, we only have a number of {\em soundness} results.

Secondly,  as we have seen in Section \ref{sec:PrincBelCh}, even in relation to the {\em single-step} parallel case, the most obvious generalisations of the Harper and Levi identities, i.e. \HIP~and \LIP, are not promising. The situation is even less clear in relation to the iterated parallel case and work on iterated versions of \HI~and \LI~for serial change is a fairly recent development (see for instance \cite{DBLP:conf/dagstuhl/NayakGOP05},  \cite{DBLP:conf/lori/Chandler019}, and \cite{DBLP:journals/ai/BoothC19}).  Nevertheless, it would be interesting to find an elegant way of connecting the respective extensions to the parallel case of the AGM postulates for serial revision and contraction, be this in connection with single-step or iterated change.


%

\bibliographystyle{splncs04}
\bibliography{CHANDLER_biblio}

\vfill
\pagebreak{}


\section*{Appendix: proofs}\label{s:appendix}


\RPackSyntPlus*

\begin{pproof}
From \CRevRn{2} to \CRevSn{2}: Assume that $\neg S_1\subseteq \Cn{(S_2)}$. We need to show that $\bel{(\Psi\circledast S_1)\circledast S_2} = \bel{\Psi\circledast S_2}$, i.e.~by \MultiRevInter~that $\min(\preccurlyeq_{\Psi\circledast S_1}, \mods{\bigwedge S_2}) = \min(\preccurlyeq_{\Psi}, \mods{\bigwedge S_2}) $. Regarding the corresponding left-to-right inclusion, suppose for reductio that there exists $x\in W$ such that $x\in \min(\preccurlyeq_{\Psi\circledast S_1}, \mods{\bigwedge S_2})$ but $x\notin \min(\preccurlyeq_{\Psi}, \mods{\bigwedge S_2})$. 
Then there exists $y\in \mods{\bigwedge S_2}$ such that $y\prec_{\Psi} x$. Since $x, y\in \mods{\bigwedge S_2}$ and $\neg S_1\in \Cn{(S_2)}$, it follows that $x, y\in \mods{\bigwedge \neg S_1}$. By \CRevRn{2}, it then follows that $y\prec_{\Psi\circledast S_1} x$, contradicting the assumption that $x\in \min(\preccurlyeq_{\Psi\circledast S_1}, \mods{\bigwedge S_2})$. The right-to-left  inclusion is then established in a similar way.

From  \CRevSn{2} to \CRevRn{2}: Assume that $x,y \in\mods{\bigwedge \neg S}$ and let $C\in L$ be such that $\mods{C}=\{x, y\}$. If $x\preccurlyeq_{\Psi} y$ but $y\prec_{\Psi\circledast S_1} x$, then $x\in \min(\preccurlyeq_{\Psi}, \mods{C})$ but $x\notin \min(\preccurlyeq_{\Psi\circledast S_1}, \mods{C})$. Since we have $\bigwedge \neg S\in\Cn(C)$, this contradicts \CRevSn{2}. The reasoning is similar if we instead assume  $y\prec_{\Psi} x$ but $x\preccurlyeq_{\Psi\circledast S_1} y$. Hence $x\preccurlyeq_{\Psi} y$ iff $x\preccurlyeq_{\Psi\circledast S_1} y$, as required
\end{pproof}

\vspace{1em}


\StepTwoRequired*

\begin{pproof}
Let $W=\{x, y, z, w\}$, $\prec_\Psi $ be given by $x\prec_\Psi \{y, z, w\} $, with $\mods{A}= \{z, w\}$ and $\mods{B}= \{y, w\}$. Then we have $\min(\STQ(\preccurlyeq_{\Psi\ast A}, \preccurlyeq_{\Psi\ast B}), W) = \{z, y, w\} \nsubseteq \{w\}= \mods{A\wedge B}$.  
\end{pproof}

\vspace{1em}

\AggRedrev*

\begin{pproof}
We first note that, as was established in \cite{ChanBoothPCOA}, TeamQueue aggregators satisfy the following two principles:

\begin{tabbing}
\=BLAHBL\=\kill
\> \UBO \> For all $S\subseteq W$, $\min(\preccurlyeq_\oplus, S) \subseteq$\\
\> \> $  \bigcup_{i\in I} \min(\preccurlyeq_i, S)$ \\[0.1cm] 
\> \LBO   \> For all $S\subseteq W$, there exists $i\in I$  s.t.\\
\> \> $ \min(\preccurlyeq_i, S)\subseteq \min(\preccurlyeq_\oplus, S)  $ \\[-0.25em]
\end{tabbing} 
\vspace{-1em}

\noindent In its semantic form, \MultiRevInter~amounts to 
$\min(\preccurlyeq_{\Psi\circledast S}, W)= \min(\preccurlyeq_{\Psi}, \mods{\bigwedge S})$. Since $\ast$ satisfies AGM, \MultiRevAggregPrec ~entails that we have $\min(\preccurlyeq_{\Psi\circledast S}, W) = \min(\preccurlyeq_{\STQ}, \mods{\bigwedge S})$. So we now need to show that  $ \min(\preccurlyeq_{\STQ}, \mods{\bigwedge S})=\min(\preccurlyeq_{\Psi}, \mods{\bigwedge S}) $. By \CRevR{1}, for all $i\in I$,  $\min(\preccurlyeq_{\Psi\ast A_i}, \mods{\bigwedge S})=\min(\preccurlyeq_{\Psi}, \mods{\bigwedge S})$. By \UBO, we therefore have $ \min(\preccurlyeq_{\STQ}, \mods{\bigwedge S})\subseteq\min(\preccurlyeq_{\Psi}, \mods{\bigwedge S}) $. The converse inclusion holds by virtue of \LBO.
\end{pproof}

\vspace{1em}


\PCR*

\begin{pproof}
We first note that, as was established in \cite{ChanBoothPCOA}, TeamQueue aggregators satisfy the following two principles:

\begin{tabbing}
\=BLAHBLA\=\kill
\>   \SPU   \>  If,  for all $i\in I$, $x\prec_i  y$, then  $x\prec_\oplus y$  \\[0.1cm]
\>  \WPU   \>  If,  for all $i\in I$, $x\preccurlyeq_i  y$, then  $x\preccurlyeq_\oplus y$  \\[-0.25em]
\end{tabbing}
\vspace{-1em}

\noindent Regarding \PCRevRn{3}: Assume that $(S\!\mid\! y)\subseteq (S\!\mid\! x)$ and that $x \prec_\Psi y$. We have three possibilities to consider, for any $A_i\in S$:
\begin{itemize}

\item[(i)] Assume $A_i\in (S\!\mid\! x)$ and  $A_i\in (S\!\mid\! y)$, so that $x, y\in \mods{A_i}$. Then, since $x \prec_\Psi y$, by  \CRevR{1}, we have $x \prec_{\Psi\ast A_i} y$. 

\item[(ii)] Assume $A_i\in (S\!\mid\! x)$ and $A_i\notin (S\!\mid\! y)$, so that $x\in \mods{A_i}$ but $y\notin \mods{A_i}$. Then, again since $x \prec_\Psi y$, we recover $x \prec_{\Psi\ast A_i} y$, but this time using  \CRevR{3}.

\item[(iii)] Assume $A_i\notin (S\!\mid\! x)$ and $A_i\notin (S\!\mid\! y)$, so that $x, y\notin \mods{A_i}$. Then we obtain, $x \prec_{\Psi\ast A_i} y$, using   \CRevR{2}.

\end{itemize}
So we know that, for all $i\in I$,  $x \prec_{\Psi\ast A_i} y$. By \SPU, we then have $x \prec_{\oplus} y$. Since $(S\!\mid\! y)\subseteq (S\!\mid\! x)$, we know that if  $y\in \mods{\bigwedge S}$, then  $x\in \mods{\bigwedge S}$. It follows from this, by  \CRevR{1},  \CRevR{2},  \CRevR{3}, and  \MultiRevAggregPrec, that $x \prec_{\Psi \circledast S}   y$, as required.

Regarding \PCRevRn{4}, the proof is entirely analogous, using  \CRevR{4} in place of \CRevR{3} and \WPU~in place of \SPU.  
\end{pproof}

\vspace{1em}

\indep*

\begin{proof} 
Assume $x \in\mods{\bigwedge S}$, $y \notin\mods{\bigwedge S}$ and $x \preccurlyeq y$. We need to show that $x \prec_{\Psi \circledast S}   y$. For all $i$, either (i) $x, y\in\mods{A_i}$ or (ii) $x\in\mods{A_i}$ and $y\in\mods{\neg A_i}$.  If (i) is the case, then $x\preccurlyeq_{\Psi \ast A_i} y$, by  \CRevR{1}. If (ii) is the case, then  $x\prec_{\Psi \ast A_i} y$, by  \IndRevR. Hence, either way, $x\preccurlyeq_{\Psi \ast A_i} y$. It then follows by \WPU~that  $x\prec_{ \oplus\}\preccurlyeq_{\Psi\ast  A_1},\ldots, \preccurlyeq_{\Psi\ast A_n}\}} y$. Since $\ast'$ satisfies  \IndRevR, $x \in\mods{\bigwedge S}$, and $y \notin\mods{\bigwedge S}$, we then have  $x\prec_{ \oplus\{\preccurlyeq_{\Psi\ast  A_1},\ldots, \preccurlyeq_{\Psi\ast A_n}\}\ast'\bigwedge S} y$ and hence $x \prec_{\Psi \circledast S}   y$, as required.  
\end{proof}

\vspace{1em}


\SSoundForTQ*

\begin{pproof}
We will derive \EssRev~in its minimal set form $(\mathrm{S}^{\scriptscriptstyle \circledast}_{\scriptscriptstyle \min})$, i.e. $\min(\preccurlyeq_{\Psi}, \mods{\bigwedge(S_1\cup \neg S_2)})=\min(\preccurlyeq_{\Psi\circledast (S_1\cup S_2)}, \mods{\bigwedge (S_1\cup \neg S_2)})$. The proof is similar to that of Proposition \ref{prop:SSoundForTQContract}, but proceeds in three steps rather than two, owing to the additional post-aggregation  revision step involved in the construction: 
\begin{itemize}

\item[(1)] We first show that, for all $A\in S_1\cup S_2$, $\min(\preccurlyeq_{\Psi}, \mods{\bigwedge(S_1\cup \neg S_2)})=\min(\preccurlyeq_{\Psi\ast A}, \mods{\bigwedge (S_1\cup \neg S_2)})$. We have two cases to consider here:
\begin{itemize}

\item[(i)] Assume $A\in S_1$: Then $\mods{\bigwedge (S_1\cup \neg S_2)}\subseteq\mods{A}$ and it follows by \CRevR{1} that $\min(\preccurlyeq_{\Psi}, \mods{\bigwedge(S_1\cup \neg S_2)})=\min(\preccurlyeq_{\Psi\ast A}, \mods{\bigwedge (S_1\cup \neg S_2)})$, as required.

\item[(ii)] Assume $A\in S_2$: Then $\mods{\bigwedge (S_1\cup \neg S_2)}\subseteq\mods{\neg A}$ and the required equality follows by \CRevR{2}.

\end{itemize}
\item[(2)]  We now establish that $\min(\preccurlyeq_{\Psi}, \mods{\bigwedge(S_1\cup \neg S_2)})=\min(\preccurlyeq_{\oplus}, \mods{\bigwedge (S_1\cup \neg S_2)})$, where $\preccurlyeq_\oplus$ denotes $\oplus\{\preccurlyeq_{\Psi\ast A_1},\ldots, \preccurlyeq_{\Psi\ast A_n}\}$. Since $\oplus$ satisfies the Factoring property \FacMin, we know that there exists $X\subseteq I$ such that   $\min(\preccurlyeq_{\oplus},  \mods{\bigwedge(S_1\cup \neg S_2)} )= \bigcup_{j\in X} \min(\preccurlyeq_{\Psi\ast A_j},  \mods{\bigwedge(S_1\cup \neg S_2)} )$. We then recover the required result from this and the fact established in (1).

\item[(3)] We finally derive $\min(\preccurlyeq_{\Psi}, \mods{\bigwedge(S_1\cup \neg S_2)})=\min(\preccurlyeq_{\Psi\circledast (S_1\cup S_2)}, \mods{\bigwedge (S_1\cup \neg S_2)})$. This follows by \CRevR{2} from the result obtained in the previous step, since $\mods{\bigwedge (S_1\cup \neg S_2)}\cap \mods{\bigwedge (S_1\cup S_2)} =\varnothing$.

\end{itemize}
\end{pproof}

\vspace{1em}


%
%
%
%
%
%
%
%


\PFails*

\begin{pproof}
See Figure \ref{fig:PFails}, which depicts a model such that $\{A, \neg B\}$ is consistent but $\{A\}\nsubseteq \bel{(\Psi\circledast \{A, B\})\circledast \{\neg B\}}$. Indeed, in this model we have $\min(\preccurlyeq_{\Psi\circledast \{A, B\}}, \mods{\neg B})= \{x\}\nsubseteq\{z, w\}=\mods{A}$. This  model corresponds to Example  \ref{ex:KPPTwo}, which was adduced as an intuitive  counterexample to \PeeRev.
\end{pproof}

\begin{figure}
  \centering
    \includegraphics[width=0.325\textwidth]{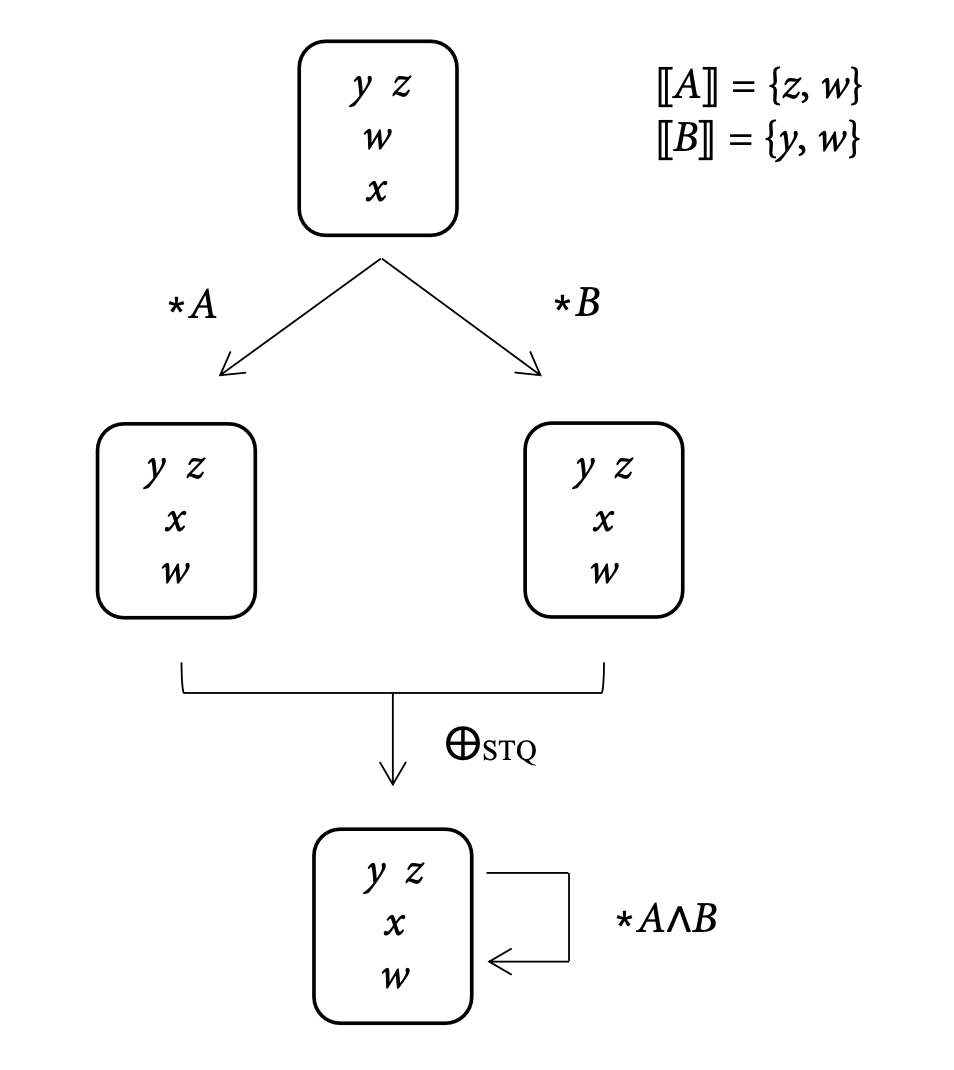}
  \caption{Illustration of the countermodel in the proof of Proposition \ref{prop:PFails}.}
  \label{fig:PFails}
\end{figure}

\vspace{1em}

\end{document}